\def\eqref#1{equation~\ref{#1}}
\def\1{\bm{1}}
\def\vp{{\bm{p}}}
\def\vs{{\bm{s}}}
\def\vt{{\bm{t}}}
\def\vu{{\bm{u}}}
\def\vx{{\bm{x}}}
\def\vz{{\bm{z}}}
\def\mI{{\bm{I}}}
\DeclareMathAlphabet{\mathsfit}{\encodingdefault}{\sfdefault}{m}{sl}
\SetMathAlphabet{\mathsfit}{bold}{\encodingdefault}{\sfdefault}{bx}{n}
\DeclareMathOperator*{\argmax}{arg\,max}
\newtheorem{defn}{Definition}
\newtheorem*{prop*}{Proposition} 
\newtheorem{thm}{Theorem}
\newtheorem*{thm*}{Theorem}
\newtheorem{rmk}{Remark}
\newtheorem{lemma}{Lemma}
\newtheorem*{example*}{Example}
\newcommand\norm[1]{\lVert#1\rVert}
\renewcommand{\sfdefault}{lmss}
\begin{document}

% If your paper is accepted and the title of your paper is very long,
% the style will print as headings an error message. Use the following
% command to supply a shorter title of your paper so that it can be
% used as headings.
%
%\runningtitle{I use this title instead because the last one was very long}

% If your paper is accepted and the number of authors is large, the
% style will print as headings an error message. Use the following
% command to supply a shorter version of the authors names so that
% they can be used as headings (for example, use only the surnames)
%
%\runningauthor{Surname 1, Surname 2, Surname 3, ...., Surname n}

\twocolumn[

\aistatstitle{Bridging the Theoretical Gap in Randomized Smoothing}

\aistatsauthor{ Blaise Delattre$^{1, 2}$ \And Paul Caillon$^{2}$  \And Quentin Barth\'elemy$^{1}$ \And Erwan Fagnou$^{2}$ \And Alexandre Allauzen$^{2, 3}$ }

% \aistatsaddress{ FOXSTREAM \And Paris-Dauphine PSL \And ESPCI PSL } ]

\aistatsaddress{\\${}^1$Foxstream\\ Vaulx-en-Velin, France \And \\${}^2$Miles Team, LAMSADE, \\ Université Paris-Dauphine, PSL \\   Paris, France \AND ${}^3$ESPCI PSL\\ Paris, France} 
]

\begin{abstract}
  Randomized smoothing has become a leading approach for certifying adversarial robustness in machine learning models. However, a persistent gap remains between theoretical certified robustness and empirical robustness accuracy. This paper introduces a new framework that bridges this gap by leveraging Lipschitz continuity for certification and proposing a novel, less conservative method for computing confidence intervals in randomized smoothing. Our approach tightens the bounds of certified robustness, offering a more accurate reflection of model robustness in practice. Through rigorous experimentation we show that our method improves the robust accuracy, compressing the gap between empirical findings and previous theoretical results.
  We argue that investigating local Lipschitz constants and designing ad-hoc confidence intervals can further enhance the performance of randomized smoothing. These results pave the way for a deeper understanding of the relationship between Lipschitz continuity and certified robustness.
\end{abstract}

\section{INTRODUCTION}

Deep neural networks have achieved remarkable success across various domains, including computer vision and natural language processing. Despite this progress, these models remain vulnerable to adversarial attacks — small and deliberately crafted perturbations that can mislead the model towards incorrect predictions \citep{szegedy_intriguing_2014}. This vulnerability raises significant concerns, especially in safety-critical applications like autonomous driving and medical diagnostics, where robustness is crucial. As a result, researchers have developed various defenses to mitigate the risk of adversarial attacks, including empirical approaches \citep{madry2018towards} and certified defenses that provide formal guarantees of robustness.

Certified defenses aim to offer theoretical assurances that a model will remain robust against adversarial perturbations within a specified budget. Among these, methods built upon Lipschitz continuity \citep{cisse_parseval_2017, tsuzuku_lipschitz-margin_2018} have been explored to control the model's sensitivity to the magnitude of input changes. However, establishing tight and practical guarantees remains challenging, especially in high-dimensional settings.

Randomized smoothing (RS) \citep{li_second-order_2018, lecuyer_certified_2019, cohen_certified_2019} has emerged as a leading approach to certified defenses due to its probabilistic nature, which strikes a balance between certified robustness and computational efficiency. By adding Gaussian noise to inputs, RS provides a probabilistic guarantee of robustness, formalized through a certified radius within which a classifier is guaranteed to be robust. However, despite its promise, a notable gap remains between empirical and certified robustness, raising questions about RS's practical effectiveness in real-world scenarios \citep{maho2022attack}. 

Closing this gap has become a central challenge in adversarial robustness research. Several recent efforts have sought to narrow it by tailoring data-driven noise \citep{alfarra2022data}, incorporating higher-order information about the smoothed classifier \citep{mohapatra2020higher},  and exploring the interaction between the Lipschitz continuity of the underlying classifier and the smoothed classifier \citep{delattre2024lipschitz}. These approaches aim to refine the certified robust radius, a key metric for quantifying robustness, by improving our understanding of the classifier's smoothness properties. This motivates the need for a deeper examination of how smoothing techniques and Lipschitz continuity interact to affect this radius.

Moreover, in the certification process, the statistical estimation procedure plays a critical role in fully explaining this gap. Specifically, the design of exact coverage confidence intervals has a pivotal impact, given the probabilistic nature of randomized smoothing. Properly controlling the risk across multiple, non-independent random variables is essential for achieving tighter bounds on certification while guaranteeing a given risk level. This issue is linked to multiple testing problems, such as the \emph{Bonferroni} correction and family-wise error rate control, where our approach aims to avoid the "conservativeness" commonly associated with these methods. Such challenges are well-known in fields like genomics and other high-dimensional data settings \citep{shaffer1995Multiple, benjamini1995controlling, goeman2014multiple, holm1979simple, efron2010large}. While several scientific fields can be plagued with false findings due to improper multiple testing \citep{Ioannidis2005Why}, the scientific community faces the same kind of risk with end-of-pipeline certification. As a consequence, in our case, certified radii may be underestimated, causing RS methods to fail to achieve a reasonable risk level \(\alpha\).

In this paper, we propose a new certified radius for randomized smoothing that incorporates the Lipschitz continuity assumptions of the base classifier. Additionally, we introduce an efficient method for computing exact coverage confidence intervals for RS in classification tasks. Our proposed confidence intervals alleviate the burden associated with a large number of classes, making the approach more scalable and practical. Our contributions aim to bridge the gap between empirical and theoretical robustness in RS through the following key insights:

\begin{itemize}
  \item We derive a new certified robust radius for RS that accounts for the Lipschitz continuity of the base classifier.
  \item We introduce a class partitioning method (CPM) for estimating exact coverage confidence intervals in the multi-class setting, improving both certified accuracy and efficiency.
  \item We provide a detailed analysis based on local Lipschitz continuity to explain the performance gap observed in randomized smoothing defenses.
\end{itemize}

A comprehensive set of experiments demonstrates the reliability of our contributions.

\section{BACKGROUND \& RELATED WORK}

%\subsection{Notation}

We consider a \( d \)-dimensional input \( \vx \in \mathcal{X} \subset \mathbb{R}^d \) and its corresponding label \( y \in \mathcal{Y} = \{1, \dots, c\} \), where \( \mathcal{Y} \) represents the set of \( c \) possible classes.  
The set \( \Delta^{c-1} = \left\{ \vp \in \mathbb{R}^c \mid \mathbf{1}^\top \vp = 1, \ \vp \geq 0 \right\} \) defines the \((c-1)\)-dimensional probability simplex.  
Let \( \tau: \mathbb{R}^c \mapsto \Delta^{c-1} \) represent a mapping onto this simplex, typically corresponding to functions like \( \mathrm{softmax} \) or \( \mathrm{hardmax} \).  
For a logit vector \( \vz \in \mathbb{R}^c \), the mapping onto \( \Delta^{c-1} \) is denoted by \( \tau(\vz) \).  
A specific case of this mapping is the \( \mathrm{hardmax} \), where for each component \( k \), we have \( \tau_k(\vz) = 
\mathrm{1}_{\arg\max_{i} \vz_i = k} \), 
concentrating all the probability mass on the maximum value. For any set $\mathcal{A}$, we also denote $\mathcal{\mid A \mid}$ its cardinal.

We define a function \( f: \mathcal{X} \mapsto \mathbb{R}^c \) as the network, which produces the logits before applying \( \tau \).  
The soft classifier is defined as:
% \[
$F(\vx) = \tau \circ f(\vx),$
%\]
which outputs a probability distribution over the \( c \) classes. The final decision is given by the hard classifier:
% \[
$F^H(\vx) = \arg\max_{k \in \mathcal{Y}} F_k(\vx),$
% \]
which returns the predicted label \( \hat{y} = F^H(\vx) \). While \( F^H(\vx) \) provides a classification for input \( \vx \), it does not convey the confidence level of the prediction.

To quantify the robustness of the classifier around a particular input, we introduce the certified radius. The certified radius \( R(F^H, \vx) \) measures the largest perturbation \( \epsilon \) that can be applied to \( \vx \), such that the classifier’s prediction remains unchanged. Formally, the certified radius under the \( \ell_2 \)-norm is defined as:

\begin{align*}
    R(F^H, \vx, y) := &\inf \{\epsilon  \mid \epsilon > 0,  \\ & 
    \exists
    \delta \in B_2(0, \epsilon), F^H(\vx + \mathbf{\delta}) \neq y\}
\end{align*}
where \( B_2(0, \epsilon) = \{ \delta \in \mathbb{R}^d \mid \|\delta\|_2 \leq \epsilon \} \).

We define the local Lipschitz constant of a function \( f \) with respect to the \( \ell_2 \)-norm over a set \( \mathcal{B} \). It is given by:
\[
L(f, \mathcal{B}) = \sup_{\substack{\vx, \vx' \in \mathcal{B} \\ \vx \neq \vx'}} \frac{\lVert f(\vx) - f(\vx^\prime) \rVert_2}{\lVert \vx - \vx' \rVert_2} \ .
\]
If \( L(f, \mathcal{B}) \) exists and is finite, we say that \( f \) is locally Lipschitz over \( \mathcal{B} \). When \( \mathcal{B} = \mathcal{X} \), we refer to \( L(f, \mathcal{X}) = L(f) \), the global Lipschitz constant.
%We note the margin $M(f(x) , y) = f_y(x) - \max_{i \neq y} f_i(x) $.
%

\subsection{Lipschitz-continuous neural networks}

Lipschitz continuity has long been recognized for its crucial role in building robust classifiers. By ensuring that a function has a bounded Lipschitz constant, one can guarantee that small input perturbations will not cause large fluctuations in the output.
\cite{tsuzuku_lipschitz-margin_2018} gives 
a bound on the certified radius of the network \(f\)  using the margin at input \(\vx\) and the Lipschitz constant of $f$.
This property highlights the strength of Lipschitz continuity as a defense mechanism against adversarial attacks. Recent advancements have focused on designing classifiers with inherent Lipschitz constraints, either by incorporating regularization during training or through specialized architectures \citep{tsuzuku_lipschitz-margin_2018, trockman_orthogonalizing_2021, meunier_dynamical_2022, araujo_unified_2022, hu2023scaling}. Some approaches \citep{araujo_lipschitz_2021, singla_fantastic_2021, delattre_efficient_2023} apply soft regularization to control the Lipschitz constant of individual layers.
Computing the exact Lipschitz constant of a neural network is NP-hard \citep{virmaaux2018lipschitz}, and obtaining close bounds is often computationally expensive or arbitrarily loose, such as through the product upper bound of layers.
Probabilistic methods for constructing Lipschitz-continuous classifiers can be applied in an architecture-agnostic manner.

\subsection{Randomized smoothing}

Randomized smoothing (RS) was initially proposed by \ \cite{lecuyer_certified_2019} and further developed in key works \citep{li_second-order_2018, cohen_certified_2019, salman_provably_2020}. The core idea behind RS is to improve the robustness of a classifier by averaging its predictions over Gaussian perturbations of the input. This gives a hard smoothed classifier $\tilde{F}^H$, which is more resistant to adversarial attacks than the classifier $F^H$. 
%We reuse the perspective of  \cite{delattre2024lipschitz}

The soft smoothed classifier $\tilde{F}$ is defined as:
\[
    \tilde{F}(\vx) = \mathbb{E}_{\mathbf{\delta} \sim \mathcal{N}(0, \sigma^2 \mI)} \left[ F(\vx + \mathbf{\delta}) \right] \ .
\]
This means that $\tilde{F}^H(\vx) = \argmax_k \tilde{F}_k(\vx)$   returns the class $k$ that has the highest probability when Gaussian noise $\mathbf{\delta}$ is added to the input $x$.
A common choice for $\tau$ is the \emph{hardmax} function, which assigns 1 to the highest score and 0 to the others, as in \cite{cohen_certified_2019, salman_provably_2020}. 

\subsubsection{Certified radius}
For ease of notation, let's denote $\vp = \tilde{F}(\vx)$, the vector of output probabilities given by the smoothed classifier, with $\vp_k$ its $k-$th component.
A key contribution of RS is that it allows us to compute a lower bound on the \emph{certified radius}.
Its estimation is based on the top two class probabilities, denoted as $\vp_{i_1}$ and  $\vp_{i_2}$ where $ i_1 = \argmax_k \vp_k(\vx)$ and $i_2 = \argmax_{k \neq i_1} \vp_k(x)$. Assuming these probabilities are sorted in decreasing order, the certified radius is given by \cite{cohen_certified_2019}:
\[
    R_{\mathrm{mult}}(\vp) = \frac{\sigma}{2} \left( \Phi^{-1}(\vp_{i_1}) - \Phi^{-1}(\vp_{i_2}) \right) \ ,
\]
where $\Phi^{-1}$ is the inverse cumulative distribution function of the standard Gaussian distribution. This radius provides a quantifiable measure of the classifier’s robustness.
A simplification is usually made which is to bound $\vp_{i_2} \leq 1- \vp_{i_1}$,
giving the bound on the radius:
% \[
    $R_{\mathrm{mono}}(\vp) = \sigma \Phi^{-1}(\vp_{i_1}) \leq \ R_{\mathrm{mult}}(\vp)$
% \]
\citep{cohen_certified_2019}.

The probabilities $\vp$ are estimated using a Monte Carlo (MC) approach. Given $n$ Gaussian samples $\mathbf{\delta}_i \sim \mathcal{N}(0, \sigma^2 I)$, the empirical estimate of $\vp$ is:
\[
    \hat{\vp} = \frac{1}{n} \sum_{i=1}^n F(\vx + \delta_i) \ .
\]
However, this Monte Carlo estimation introduces uncertainty because of the limited number of samples. The goal of certification is to compute an estimate of the radius with an exact risk level of \( (\alpha) \).
To manage this uncertainty, we follow the approach of \cite{levine_certifiably_2019}, which uses concentration inequalities to bound the true probabilities of each class for $R_{\mathrm{mult}}$ or use Pearson-Clopper as \cite{cohen_certified_2019} for $R_{\mathrm{mono}}$.

The radius \( R_{\mathrm{mono}} \) is often simpler to use, as it allows for controlling robustness with a precise confidence level \(1 - \alpha \). However, in binary classification scenarios where class \( i_1 \) is certified against a merged superclass, the probability bound \( R_{\mathrm{mono}} \) becomes ineffective when \( \vp_{i_1} \leq \frac{1}{2} \) \citep{voracek_improving_2023, voracek2024treatment}.

Alternatively, the radius \( R_{\mathrm{mult}} \), which considers the probabilities \( \vp_{i_1} \) and \( \vp_{i_2} \), can certify a non-trivial radius even when \( \vp_{i_1} < \frac{1}{2} \), see Figure~\ref{fig:comparison_r2_vs_r4}, providing more accurate estimates especially for larger values of smoothing variance \citep{delattre2024lipschitz} which corresponds to higher entropy distribution of $\vp$.

However, constructing confidence intervals for \( R_{\mathrm{mult}} \) requires extra caution, as it involves managing several interdependent quantities. Previous approaches \citep{voracek_improving_2023, voracek2024treatment} introduce errors in how they allocate the risk budget \( \alpha \). This flawed allocation breaks the Bonferroni correction by assuming dependence between variables, as discussed by a counterexample (\href{https://github.com/blaisedelattre/bridging_the_gap_rs/blob/main/counter_example.py}{link}) in the Appendix.

\subsection{Exact coverage confidence interval}

To estimate the certified radius, one must rely on approximations since the smoothed classifier cannot be computed directly. MC integration provides this approximation, but to control the risk associated with the certified radius accurately, an exact coverage confidence interval is needed.

\begin{defn}
Let \( \xi \) be a random variable. An \( (1 - \alpha) \)-exact coverage confidence interval for a parameter \(\theta\) is an interval \([\underline{\xi}, \overline{\xi}]\) such that the probability of \(\theta\) lying within the interval is greater than \(1 - \alpha\), i.e.,
\[
\mathbb{P}(\underline{\xi} \leq \theta \leq \overline{\xi}) \geq 1 - \alpha.
\]
Here, \(\alpha\) represents the risk, which is the probability that the interval does not contain the true value of \(\theta\).
\end{defn}

Commonly used methods to construct such intervals include empirical Bernstein bounds \citep{maurer_empirical_2009}, the Hoeffding inequality \citep{boucheron_concentration_2013} for continuous-valued simplex mapping, and Clopper-Pearson methods for discrete mapping (hardmax). These approaches ensure the certified radius is computed with a controlled risk, providing reliable estimates for robustness certification.

\subsection{Confidence validation for \( R_{\mathrm{mono}} \)}

In the context of certification using \( R_{\mathrm{mono}} \), the radius is determined based on the index \( i_1 \), which is obtained from independent samples, following the approach of \cite{cohen_certified_2019}. The certified radius is defined as:
\[
\mu = R_{\mathrm{mono}}(\underline{\hat{\vp}_{i_1}}) \ ,
\]
where \( \underline{\hat{\vp}_{i_1}} \) represents the lower bound of the \( (1 - \alpha) \)-exact Clopper-Pearson confidence interval for the observed proportion \( \hat{\vp}_{i_1} \). The confidence test for this certification involves the following,
\( H_0 \): The probability of the certified radius being at least \( \mu \) is at least \( 1 - \alpha \), i.e.,
\[
H_0: \mathbb{P}(R_{\mathrm{mono}}(\vp_{i_1}) \geq \mu) \geq 1 - \alpha \ .
\]
\( H_1 \): The probability of the certified radius being less than \( \mu \) is less than \( \alpha \), i.e.,
\[
H_1: \mathbb{P}(R_{\mathrm{mono}}(\vp_{i_1}) < \mu) < \alpha \ .
\]

This 
%testing 
framework helps to determine whether the certified radius provides a reliable measure of robustness under the specified risk level $\alpha$.

\subsection{Confidence validation for \( R_{\mathrm{mult}} \)}

When certifying \( R_{\mathrm{mult}} \), we need to control the overall error rate across multiple comparisons, as we are dealing with several confidence intervals for different classes. The Bonferroni correction is used here to adjust the significance level, ensuring valid hypothesis testing even with dependent tests \citep{benjamini1995controlling, hochberg1987multiple, dunn1961multiple}.

\begin{thm}[\cite{hochberg1987multiple}]
\label{thm:bonferroni}
Let \( H_1, H_2, \dots, H_m \) be a family of \( m \) null hypotheses with p-values \( p_i \), and let \( \alpha \) be the desired family-wise error rate. The family-wise error rate (FWER) is defined as

\[
\text{FWER} = \mathbb{P}\left(\bigcup_{i=1}^{m} \{ \text{reject } H_i \mid H_i \text{ is true}\}\right),
\]

representing the probability of making at least one Type I error among the multiple tests. The Bonferroni correction sets the individual significance level for each test to \( \frac{\alpha}{m} \), such that

\[
(H_i) \text{ is rejected if } p_i \leq \frac{\alpha}{m} \ .
\]

This ensures control of the family-wise error rate at level \( \alpha \), even for dependent tests.
\end{thm}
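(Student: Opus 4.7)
The plan is to prove the Bonferroni correction by a direct application of the union bound (Boole's inequality) to the events corresponding to erroneous rejections. The only probabilistic input needed is the defining property of a p-value, namely that under the null hypothesis $H_i$, the p-value $p_i$ is stochastically larger than uniform on $[0,1]$, so that $\mathbb{P}(p_i \leq t \mid H_i \text{ true}) \leq t$ for every $t \in [0,1]$.

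First, I would let $I_0 \subseteq \{1, \dots, m\}$ denote the (unknown) subset of indices for which $H_i$ is actually true, and define the event $A_i = \{p_i \leq \alpha/m\}$ for $i \in I_0$, which is precisely the event that $H_i$ is wrongly rejected by the Bonferroni rule. By construction, a Type I error occurs somewhere if and only if $\bigcup_{i \in I_0} A_i$ occurs, so
\[
\text{FWER} = \mathbb{P}\!\left(\bigcup_{i \in I_0} A_i\right).
\]
Next, I would apply the union bound to upper bound this probability by $\sum_{i \in I_0} \mathbb{P}(A_i)$. Crucially, the union bound requires no independence assumption on the tests, which is what allows the statement to hold for dependent tests as claimed in the theorem.

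Then I would use the p-value property: for each $i \in I_0$, $\mathbb{P}(A_i) = \mathbb{P}(p_i \leq \alpha/m \mid H_i \text{ true}) \leq \alpha/m$. Summing over $i \in I_0$ gives $\text{FWER} \leq |I_0| \cdot \alpha/m \leq m \cdot \alpha/m = \alpha$, which is the desired control.

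There is no real obstacle here; the proof is one line of Boole's inequality followed by the definition of a p-value. The only subtlety worth flagging in the write-up is that the bound holds for arbitrary joint dependence among the $p_i$'s, since the union bound does not require independence. This is precisely the property that makes the Bonferroni correction applicable to the randomized-smoothing setting, where the class-probability estimates $\hat{\vp}_k$ share the same Monte Carlo samples and are therefore dependent. I would end the proof by observing that the inequality $|I_0| \leq m$ is only tight when every hypothesis is null, so the procedure is typically conservative, foreshadowing the paper's motivation for the class-partitioning refinement.
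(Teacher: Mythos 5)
Your proof is correct and is the standard union-bound argument for the Bonferroni correction: restrict to the set of true nulls, apply Boole's inequality (which needs no independence), and invoke the super-uniformity of p-values under the null. The paper does not prove this classical result itself but cites \cite{hochberg1987multiple}, and your argument is essentially the one found there, so there is nothing to reconcile; your closing remark that dependence is handled automatically, and that the bound is conservative when few hypotheses are null, accurately reflects the paper's motivation for the class-partitioning method.
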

To handle the multiple comparisons in our problem, we choose \( \alpha' = \frac{\alpha}{c} \), where \( c \) is the number of classes, thereby adjusting each risk level to control the overall error rate. This adjustment allows us to simultaneously validate \( (1 - \alpha) \)-level confidence intervals for all classes.
We define
\[
\mu = R_{\mathrm{mult}}(\underline{\hat{\vp}_{I_1}}, \overline{\hat{\vp}_{I_2}}) = \frac{\sigma}{2} \left( \Phi^{-1}(\underline{\hat{\vp}_{I_1}})
- \Phi^{-1}(\overline{\hat{\vp}_{I_2}}) \right),
\]
where \( I_1 \) is the index of the highest lower bound \( \underline{\hat{\vp}_i} \) among the \( (1 - \alpha') \)-level Clopper-Pearson confidence intervals, and \( I_2 \) is the index of the second-highest upper bound \( \overline{\hat{\vp}_i} \), with \( i \neq I_1 \).

By constructing these intervals at level \( 1 - \alpha' \), Theorem~\ref{thm:bonferroni} ensures that the probability of any interval failing to cover the true parameter \( \vp_i \) is at most \( \alpha \), thus validating our confidence validations. Similarly:
% The hypotheses are
\begin{align*}
H_0:\ &\mathbb{P}\left( R_{\mathrm{mult}}(\boldsymbol{p}) \geq \mu \right) \geq 1 - \alpha \\
H_1:\ &\mathbb{P}\left( R_{\mathrm{mult}}(\boldsymbol{p}) < \mu \right) < \alpha \ .
\end{align*}

The choice of \( \alpha' = \frac{\alpha}{c} \) ensures correct coverage across multiple intervals. Using a different value, such as \( \alpha' = \frac{\alpha}{2} \), would lead to incorrect coverage assumptions, as shown by a counterexample in the Appendix involving a simple multinomial distribution for \( \vp \), where dependence assumptions among the confidence intervals are invalid.

\section{BRIDGING THE GAP ON THE CERTIFIED RADIUS}
Despite advances in robustness certification, a significant gap persists between theoretical bounds and empirical robustness. This section introduces novel methods to close this gap, including an efficient procedure for constructing tighter confidence intervals using class partitioning and a new certificate based on Lipschitz continuity.

\begin{figure*}[t]
    \centering
    \includegraphics[width=0.95\textwidth]{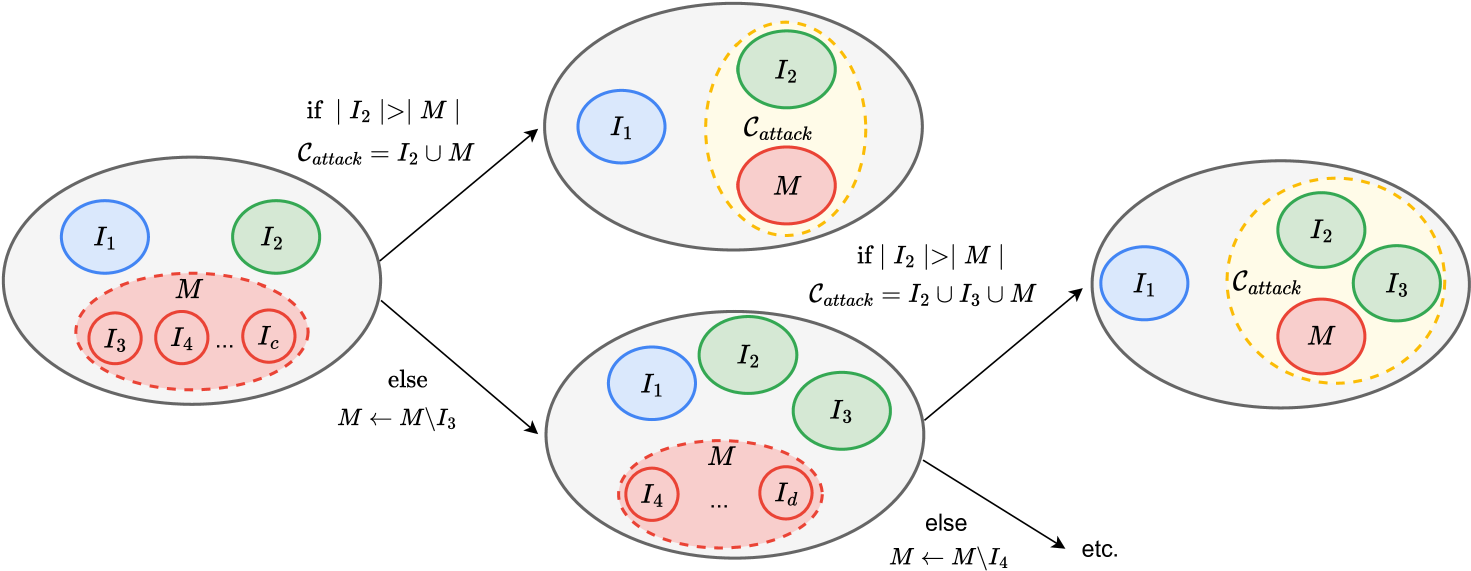}
    \caption{Illustration of the initial phase of the class partitioning method (CPM). The number of counts in class $I$ is noted $\mid I \mid$.}
    \label{fig:explanation_bucket_algo}
\end{figure*}

\subsection{New procedure for efficient confidence intervals with class partitioning.}

The robustness certification measure \( R_{\mathrm{multi}} \) performs better for smaller values of \( \hat{\vp}_{i_1} \), but it requires careful handling of confidence intervals and a conservative risk level \( \alpha' = \frac{\alpha}{c} \), where \( c \) is the number of classes. This conservative choice can be a limitation in cases with a large number of classes, such as ImageNet (\( c = 1000 \)), and a limited number of samples.

To address this, we propose a Class Partitioning Method (CPM) algorithm that combines multiple hypothesis testing using Bonferroni correction with a partitioning strategy, as detailed in Algorithm~\ref{algo:bucket_algo} in the Appendix.  The algorithm begins by drawing an initial set of \( n_0 \) samples and partitioning the classes into "buckets" based on their selection counts, as illustrated in Figure~\ref{fig:explanation_bucket_algo}. This initial grouping reduces the number of comparisons, enhancing statistical power while avoiding the computational cost of testing each class individually.

Initially, three buckets are considered: \( I_1 \), representing the most probable class; \( I_2 \), the second most probable class; and a meta-class \( M \), which includes all remaining classes. This setup enables a focus on the most relevant comparisons by treating the less significant classes together as one group.
The algorithm then iteratively refines the partitioning by evaluating whether any classes in \( M \) should be treated as separate buckets. If the total count of \( M \) exceeds that of \( I_2 \), one or more classes from \( M \) are removed and considered individually, aiming to balance the comparison. This process continues until the partitioning results in a set of buckets \( \mathcal{C}_{\mathrm{attack}} \), composed of \( I_2 \), the individually separated classes from \( M \), and the remaining meta-class \( M \).

In the estimation phase, the algorithm draws additional \( n \) samples and calculates upper confidence bounds for each bucket in \( \mathcal{C}_{\mathrm{attack}} \) and lower confidence bound for class \( I_1 \). The Bonferroni correction is applied at the bucket level, adjusting the significance level to \( \alpha' = \dfrac{\alpha}{{c^\star}} \), where \( {c^\star} = \mid \mathcal{C}_{\mathrm{attack}} \mid + 1 \) is the total number of buckets. This approach lowers the conservativeness compared to the initial setting of \( \alpha' = \frac{\alpha}{c} \), resulting in tighter confidence intervals while maintaining control over the family-wise error rate.

By transitioning from \( \alpha' = \frac{\alpha}{c} \) to \( \alpha' = \dfrac{\alpha}{{c^\star}} \), the algorithm dynamically adjusts the risk level based on the refined partitioning. This method achieves more efficient and reliable robustness guarantees for multi-class classifiers, especially in large-scale settings, e.g., ImageNet (\( c = 1000 \)). 
Although these improved confidence intervals help tighten the bounds on the certified radius, they do not fully explain the robustness gap observed with existing methods.

\subsection{New certificate with Lipschitz continuity}

We enhance the framework by incorporating Lipschitz continuity assumptions on the network to address this gap, as explored in \cite{chen2024diffusion, delattre2024lipschitz}. This assumption is reasonable, given that classifiers trained with noise \citep{salman_denoised_2020} or built on diffusion models \citep{carlini_certified_2023} naturally develop local Lipschitz properties, as demonstrated in Table~\ref{tab:lip_comparison}.

The certified radius bounds obtained from randomized smoothing (RS) improve significantly as the probability \( \hat{\vp}_{I_1} \) approaches 1, causing the radius to grow nonlinearly toward infinity. In contrast, the bound derived in \cite{tsuzuku_lipschitz-margin_2018} exhibits a linear dependence on \( \hat{\vp}_{I_1} \), leading to a more gradual increase in the certified radius. Consequently, RS provides substantially stronger robustness guarantees when \( \hat{\vp}_{I_1} \) is high. As shown in \cite{cohen_certified_2019}, the RS bounds consistently outperform the linear Lipschitz-based bound from \cite{tsuzuku_lipschitz-margin_2018} regarding certified robustness.

Most existing works continue to use the standard RS radii 
\( R_{\mathrm{mono}} \) and \( R_{\mathrm{multi}} \), 
which implicitly assume a globally smooth classifier but do not explicitly account for its local Lipschitz properties.
The gap between theoretical RS bounds and practical performance can be partially explained by the overlooked role of local Lipschitz continuity.
While Lipschitz-based bounds such as those in~\citet{tsuzuku_lipschitz-margin_2018} provide general robustness guarantees,
they do not account for the locally varying smoothness of neural networks.
Our approach refines these bounds by integrating local Lipschitz continuity into the RS framework, 
providing a new perspective on how smoothness impacts certified radii.

% By incorporating Lipschitz continuity into the analysis of the base classifier, \cite{delattre2024lipschitz} derived a tighter certified radius. However, their approach was limited to the linear bound of \cite{tsuzuku_lipschitz-margin_2018} and not specifically tailored to the RS framework. Most existing works continue to use the standard RS radii \( R_{\mathrm{mono}} \) and \( R_{\mathrm{multi}} \).
%
The following theorem provides a key result for deriving certified radii under the assumption of Lipschitz continuity on the function \( F \) for the randomized smoothing framework.
\begin{thm}[Randomized Smoothing with Local Lipschitz Continuity]
\label{thm:lip_local_rs}
Let \( \sigma > 0 \) and let \( F : \mathbb{R}^d \to [0, 1] \) be a Lipschitz continuous function. Define the smoothed classifier:
$\tilde{F}(\vx) = \mathbb{E}_{\boldsymbol{\delta} \sim \mathcal{N}(0, \sigma^2 \mathbf{I})} \left[ F(\vx + \boldsymbol{\delta}) \right] $ .
Then, for any \( \vx \in \mathbb{R}^d \), the function \( \Phi^{-1} \circ \tilde{F} \) is locally Lipschitz continuous within the ball \( B(\vx, \rho) = \{ \vx^\prime \in \mathbb{R}^d : \lVert \vx^\prime - \vx \rVert_2 \leq \rho \} \), with Lipschitz constant:

\begin{align*}
    &L\left( \Phi^{-1} \circ \tilde{F}, B(\vx, \rho) \right) 
     = \\
    &L(F) \sup_{\vx^\prime \in B(\vx, \rho)} \left\{ 
     \frac{  \Phi_\sigma\left(s_0(\vx^\prime) + \dfrac{1}{L(F)}\right) - \Phi_\sigma(s_0(\vx^\prime))}
    {
    \phi(
        \Phi^{-1}(\tilde{F}(\vx^\prime))
        )
    } 
    \right\} \
\end{align*}

where \( \Phi_\sigma \) is the cumulative distribution function of the normal distribution with standard deviation \( \sigma \), $\phi$ is the density of the standard normal distribution, and \( s_0(\vx^\prime) \) is determined for each \( \vx^\prime \in B(\vx, \epsilon) \) by solving:
\[
\tilde{F}(\vx^\prime) = 1 - L(F) \int_{s_0(\vx^\prime)}^{s_0(\vx^\prime) + \frac{1}{L(F)}} \Phi_\sigma(s) \, ds \ .
\]
\end{thm}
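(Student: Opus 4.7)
The plan is to couple the chain rule for $\Phi^{-1}\!\circ\!\tilde F$ with a variational search, at each point $\vx'$, for the worst-case base classifier $F$ consistent with the observed value $\tilde F(\vx')=p$. Since $F$ is bounded and $\tilde F$ is its Gaussian convolution, $\tilde F$ is smooth, so at every $\vx'$ the chain rule yields
\[
\nabla(\Phi^{-1}\!\circ\!\tilde F)(\vx') \;=\; \frac{\nabla\tilde F(\vx')}{\phi(\Phi^{-1}(\tilde F(\vx')))},
\]
and the local Lipschitz constant on $B(\vx,\rho)$ is the supremum of the norm of this expression over $\vx'\in B(\vx,\rho)$, which already contributes the claimed denominator. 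The whole task is therefore to obtain a tight upper bound on $\|\nabla\tilde F(\vx')\|$ as a function of $p$ and $L(F)$ alone.

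For a unit direction $\vv$, Stein's identity gives $\vv\cdot\nabla\tilde F(\vx')=\mathbb E_{\boldsymbol\delta}[F(\vx'+\boldsymbol\delta)\,(\vv\cdot\boldsymbol\delta)/\sigma^{2}]$, a linear functional of $F$ on the convex admissible set $\{F:[0,1]\text{-valued},\ L(F)\text{-Lipschitz},\ \tilde F(\vx')=p\}$. I first reduce to one dimension in two averaging steps: (i) averaging $F$ over rotations that fix $\vv$ gives a cylindrically symmetric $\bar F$ preserving both constraints and the functional, because the Gaussian and $\vv\cdot\boldsymbol\delta$ are invariant under such rotations; (ii) averaging $\bar F$ over the perpendicular radial coordinate (against the chi-density of $\|\boldsymbol\delta_\perp\|$) produces a profile $F(\vy)=\psi(\vv\cdot(\vy-\vx'))$ with $\psi:\mathbb R\to[0,1]$, $|\psi'|\le L(F)$, the same integral constraint, and the same functional. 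The problem collapses to a 1D extremal problem: maximize $\int\psi'(s)\phi_\sigma(s)\,ds$ over $\psi'\in[0,L(F)]$ with $\psi(-\infty)=0,\psi(+\infty)=1$ and $\int\psi\phi_\sigma\,ds=p$. An integration by parts rewrites the last constraint as $\int\psi'\Phi_\sigma\,ds=1-p$ together with $\int\psi'\,ds=1$, so $\psi'/L(F)$ is a $[0,1]$-valued density of prescribed mass $1/L(F)$ and prescribed $\Phi_\sigma$-weighted mass, maximizing the $\phi_\sigma$-weighted mass. A Neyman-Pearson/bang-bang argument then forces $\psi'=L(F)\mathbf 1_A$ with $A=\{s:\phi_\sigma(s)>\lambda+\mu\Phi_\sigma(s)\}$; and since $\phi_\sigma(s)-\mu\Phi_\sigma(s)$ is unimodal in $s$ (its derivative $\phi_\sigma(s)[-s/\sigma^{2}-\mu]$ has at most one zero), the set $A$ is a single interval, necessarily $[s_0,s_0+1/L(F)]$. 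Hence $\psi$ is the ramp $L(F)\bigl((s-s_0)_+\!\wedge\!1/L(F)\bigr)$, and a final integration by parts yields both the implicit equation $p=1-L(F)\int_{s_0}^{s_0+1/L(F)}\Phi_\sigma(s)\,ds$ defining $s_0(\vx')$ and the bound $\|\nabla\tilde F(\vx')\|\le L(F)[\Phi_\sigma(s_0(\vx')+1/L(F))-\Phi_\sigma(s_0(\vx'))]$. Dividing by $\phi(\Phi^{-1}(p))$, pulling out the constant $L(F)$, and taking the supremum over $\vx'\in B(\vx,\rho)$ recovers the stated formula.

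The main obstacle will be the Neyman-Pearson step: rigorously showing that the Lagrangian superlevel set is a single interval rather than a union, and handling the degenerate regimes $p\to 0$ and $p\to 1$ in which the ramp collapses or $s_0\to\pm\infty$ by a limiting argument. The unimodality of $\phi_\sigma-\mu\Phi_\sigma$ sketched above handles the generic case, but some care is needed to select multipliers $\lambda,\mu$ that simultaneously enforce the mass and $\Phi_\sigma$-moment constraints; existence of such multipliers should follow from a continuity/intermediate-value argument as $s_0$ ranges over $\mathbb R$. The two averaging steps in the dimension reduction, by contrast, are clean because both preserve the Lipschitz constraint by Jensen and the linear functional by rotational invariance of the Gaussian. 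Once these technical points are settled, the remaining computations—Stein's identity, the two integrations by parts, and the pointwise supremum—are routine.
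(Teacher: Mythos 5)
Your overall route is the one the paper takes: the chain rule for \( \Phi^{-1}\circ\tilde F \) contributes the denominator \( \phi(\Phi^{-1}(\tilde F(\vx'))) \), Stein's lemma turns the directional derivative \( \vu^\top\nabla\tilde F(\vx') \) into a linear functional of the base classifier, integrating out the coordinates orthogonal to \( \vu \) reduces the search to a one-dimensional profile subject to the same box, Lipschitz and Gaussian-mean constraints, and the extremal profile is the ramp of slope \( L(F) \), which simultaneously produces the implicit equation for \( s_0 \) and the value \( L(F)[\Phi_\sigma(s_0+1/L(F))-\Phi_\sigma(s_0)] \). The only methodological difference is the 1D step: the paper casts it as an optimal control problem (state \( y \), control \( u=y' \) with \( |u|\le L \), state constraints \( 0\le y\le 1 \)) and invokes Pontryagin's maximum principle, whereas you integrate by parts and run a Neyman--Pearson/bang-bang argument; your unimodality observation for \( \phi_\sigma-\mu\Phi_\sigma \) correctly forces the optimal support to be a single interval of length \( 1/L(F) \). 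That substitution is legitimate and arguably more elementary than the paper's control-theoretic lemma.

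There is, however, a gap in how you pose the 1D problem. The dimension reduction only yields a profile with \( 0\le\psi\le 1 \), \( |\psi'|\le L(F) \) a.e., and \( \int\psi\,\phi_\sigma = p \); it gives neither monotonicity nor the boundary conditions \( \psi(-\infty)=0 \), \( \psi(+\infty)=1 \), yet you impose \( \psi'\in[0,L(F)] \) and those limits as constraints. This shrinks the feasible set, so the maximum you compute is a priori only a lower bound on the true supremum of \( \vu^\top\nabla\tilde F \) --- the wrong direction for a Lipschitz certificate. Moreover, your identity \( \int\psi'\Phi_\sigma = 1-p \) uses exactly those boundary values (the boundary term is \( \psi(+\infty) \)), so it is not available for a general feasible profile, even though the objective rewriting \( \int s\,\psi\,\phi_\sigma = \int\psi'\phi_\sigma \) is. To close the argument you must show that the optimum over the full class (with \( \psi' \) allowed in \( [-L(F),L(F)] \) and no prescribed limits) is attained by a nondecreasing profile saturating at \( 0 \) and \( 1 \); this is precisely what the paper's maximum-principle analysis with state-constraint multipliers delivers (the optimal control never takes the value \( -L \), and the state sticks to \( 0 \) and \( 1 \) outside the ramp). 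Either prove that monotonicity/saturation claim directly (for instance by running your Lagrangian argument on \( \psi \) itself with the constraint kept as \( \int\psi\,\phi_\sigma=p \), or via a rearrangement argument), or adopt the paper's control formulation for this step. The multiplier-existence issue you flag yourself is comparatively minor and is settled by the intermediate-value argument you sketch, since \( s_0\mapsto 1-L(F)\int_{s_0}^{s_0+1/L(F)}\Phi_\sigma \) is continuous and spans \( (0,1) \).
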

$s_0$ can be computed using numerical root finding Brent’s method \citep{brent1973algorithms} with $0$ machine precision.
This result contrasts with the standard RS framework, where robustness certificates are derived without explicitly accounting for the classifier’s smoothness.

A key challenge in applying this result in practice is the computation of local Lipschitz constants, 
which are difficult to estimate precisely with certification.
To estimate the local Lipschitz constant we rely on~\citet{yang2020closer} using a gradient ascent approach (similar to PGD for crafting adversarial attack). \\
Figure~\ref{fig:comparison_r2_vs_r4} illustrates that 
incorporating Lipschitz continuity of $F$ leads to a more refined characterization of the certified radius for predictions $\tilde{F}(\vx)$, our approach suggests that better Lipschitz estimates—particularly local ones—could further close the gap between theoretical guarantees and empirical observations.

\subsection{Certificate radius with local Lipschitz}
The previous section's result provides the foundation for introducing tighter certified radii by incorporating the Lipschitz continuity of the function \( F \). When \( F \) is assumed to be Lipschitz continuous, the Lipschitz properties of the smoothed classifier can be used to derive stronger bounds on the certified radius.
We define two new certified radii that leverage these Lipschitz properties:

\begin{align*}
    R_{\mathrm{multiLip}}(\vp) = &\frac{1}{2} \left( \frac{\Phi^{-1}(\vp_{i_1})}{L\left( \Phi^{-1} \circ \tilde{F}_{i_1}, B(\vx, \rho) \right)} \right. \\
    &\left. - \frac{\Phi^{-1}(\vp_{i_2})}{L\left( \Phi^{-1} \circ \tilde{F}_{i_2}, B(\vx, \rho) \right)} \right) \ ,
\end{align*}

where \( \vp_1 \) and \( \vp_2 \) represent the top two probabilities associated with the prediction.
Simplifying with bound $\vp_2 \leq 1 - \vp_1$ we get 

\begin{align*}
    R_{\mathrm{monoLip}}(\vp) = \frac{\Phi^{-1}(\vp_1)}{L\left( \Phi^{-1} \circ \tilde{F}_{i_1}, B(\vx, \rho) \right)} \ .
\end{align*}

These radii provide tighter robustness guarantees by accounting for the Lipschitz continuity of the base classifier, leading to stronger certified bounds compared to standard randomized smoothing methods that do not incorporate this assumption, they include locality with neighborhood radius $\rho$.
\begin{figure}
    \centering
    \includegraphics[width=1.0 \linewidth]{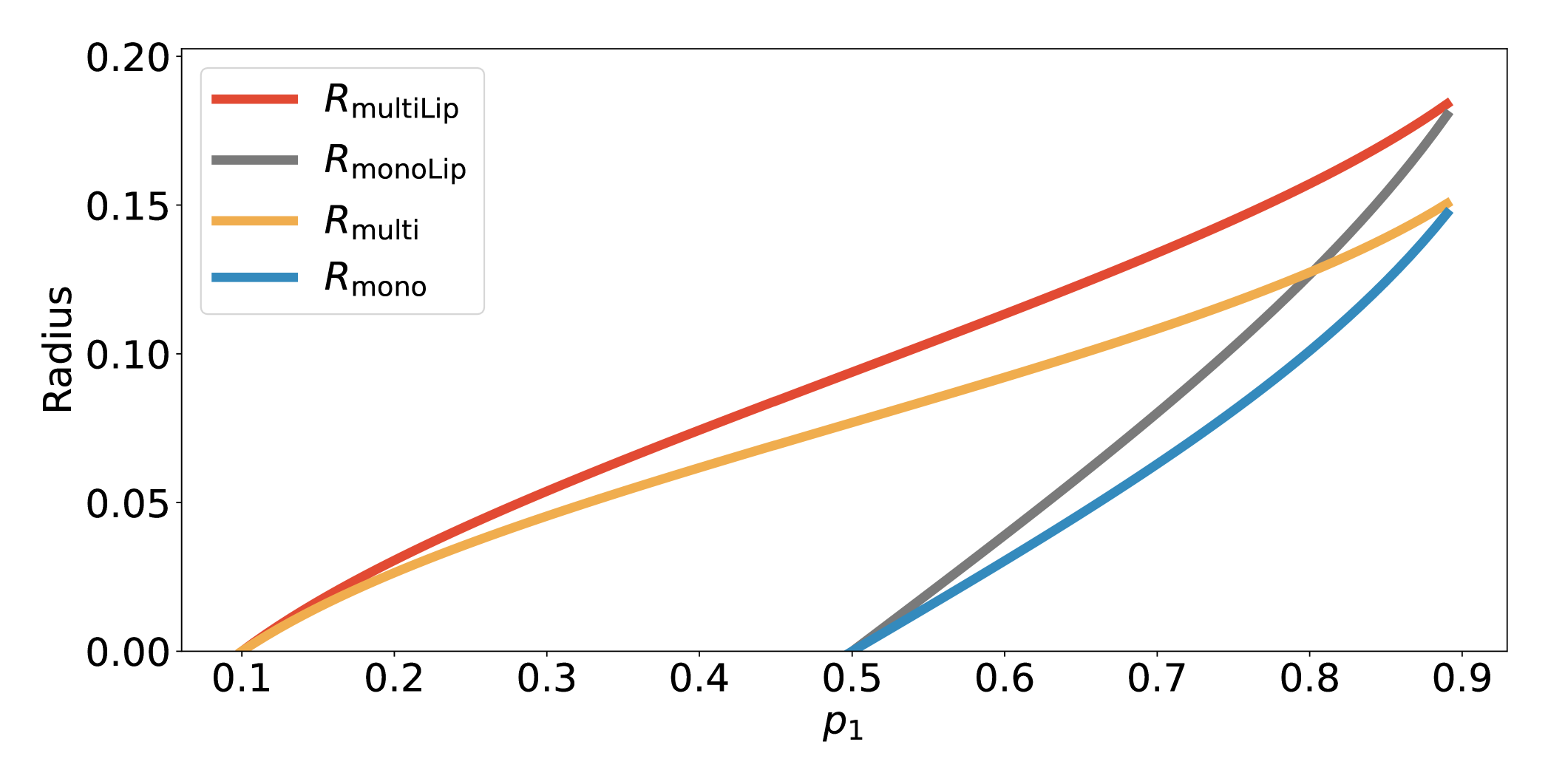}
    \caption{Comparison of radii as a function of $\vp_1$, for $L(F)=4$, $\sigma=0.12$ and $\vp_2 = 0.1$. }
    \label{fig:comparison_r2_vs_r4}
\end{figure}

Figure~\ref{fig:comparison_r2_vs_r4} shows that with a $4$-Lipschitz soft classifier \( F \), the radius \( R_{\mathrm{mono}} \) underestimates the certified radius \( R_{\mathrm{monoLip}} \), demonstrating that the Lipschitz continuity assumption tightens the bound on the true radius. Moreover, the Lipschitz-adjusted radii (\( R_{\mathrm{multiLip}} \) and \( R_{\mathrm{monoLip}} \)) consistently outperform the original radii. 
However, it is important to note that the radii presented in Figure~\ref{fig:comparison_r2_vs_r4} are approximations, as they rely on an estimation of the local Lipschitz constant rather than its exact computation. Despite this, the overall trend remains valid: incorporating Lipschitz continuity into the RS framework leads to tighter certified bounds and helps explain the gap between theoretical and empirical robustness in RS.

For large \( \vp_1 \) values or large \( \sigma \), 
the behavior of the Lipschitz-adjusted radii converges to that of standard RS, 
indicating that the influence of local Lipschitz properties becomes less significant when the classifier is already highly confident 
or when the smoothing parameter dominates.
However, in the low-confidence regime, Lipschitz-aware bounds provide substantially tighter guarantees, 
revealing that smoothness plays a crucial role in regions of uncertainty.
This insight helps explain why RS models often exhibit stronger empirical robustness than predicted by standard theoretical bounds.

% For large \( \vp_1 \) values or large \( \sigma \), the behavior of the Lipschitz radii becomes similar to the original radii, indicating that the advantage of the Lipschitz approach diminishes in these cases. A null value of $\rho$ is taken to illustrate the plot.
%
The choice of configuration for plotting the certified radii—specifically, the Lipschitz constant \( L(F) = 4 \), \( \sigma = 0.12 \), and \( \vp_2 = 0.1 \)—is based on values obtained from a LiResNet model trained with data augmentation using Gaussian noise of standard deviation $0.5$. This setup is intended to illustrate the potential gains achievable with the Lipschitz-adjusted radii. 

\label{exp:exp_certif_imagenet_comparison_ic}
\begin{figure*}[t]
    \centering
    \includegraphics[width=1\linewidth]{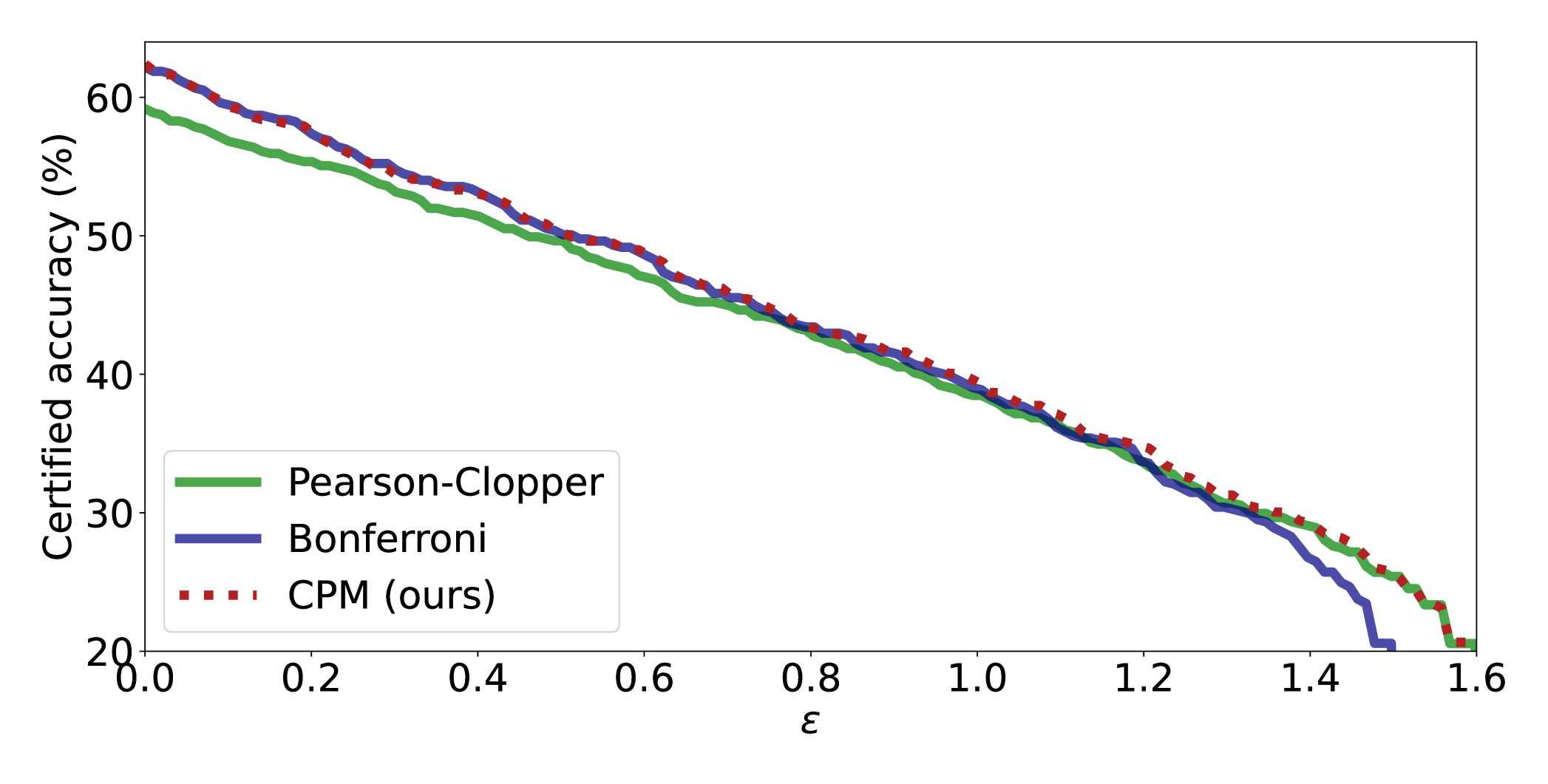}
    \caption{Comparison of various confidence interval methods for certified accuracy estimation with smoothing standard deviation $\sigma=0.5$ on the ImageNet dataset, using ResNet-50 trained with noise injection ($\sigma=0.5$). The plot contrasts our CPM method with Bonferroni and Pearson-Clopper intervals across different perturbation levels $\epsilon$.}
    \label{fig:ic^Heap_bf_pc-label}
\end{figure*}
\section{EXPERIMENTS}
For all experiments, we use a subset of 500 images for ImageNet and 1,000 images for CIFAR-10. The risk level is set to \(\alpha = 0.001\), with \(n = 10^4\) samples used for estimation and \(n_0 = 100\) used for initialization. All experiments are conducted using a single A100 GPU, you can find the code on \href{https://github.com/blaisedelattre/bridging_the_gap_rs}{\textbf{github}}.
\subsection{Certificate with new confidence interval}
We evaluate our methods on the task of image classification using the ImageNet dataset, employing a ResNet-50 architecture trained with noise injection (\(\sigma=0.5\)). The randomized smoothing procedure adds Gaussian noise to the inputs, providing probabilistic robustness guarantees. We report certified accuracy as a function of the level of \(\ell_2\) perturbation \(\epsilon\), which quantifies the size of adversarial perturbations against which the model is certified to be robust.

For certification, we use the certified radius \(R_{\text{mult}}\) for CPM and the standard Bonferroni correction over \(c\) classes, while \(R_{\text{mono}}\) is used for Pearson-Clopper intervals. Our CPM method typically produces an effective number of class $c^\star$ between $3$ and $5$ providing a less conservative approach to certification. These gains are registered with no additional computational cost compared to the inference cost of the network. As shown in Figure~\ref{fig:ic^Heap_bf_pc-label}, our CPM method achieves a balance between the strong performance of Bonferroni intervals at high perturbation levels and the robustness of Pearson-Clopper intervals at lower perturbations. In the Appendix, we provide more graphs for different $\sigma$ and the CIFAR-10 dataset (10 classes).
This demonstrates its effectiveness in providing reliable certified accuracy across a wide range of \(\epsilon\) values.

\subsection{Randomized smoothing with Lipschitz certificate}
\begin{figure}
    \centering
    \includegraphics[width=1\linewidth]{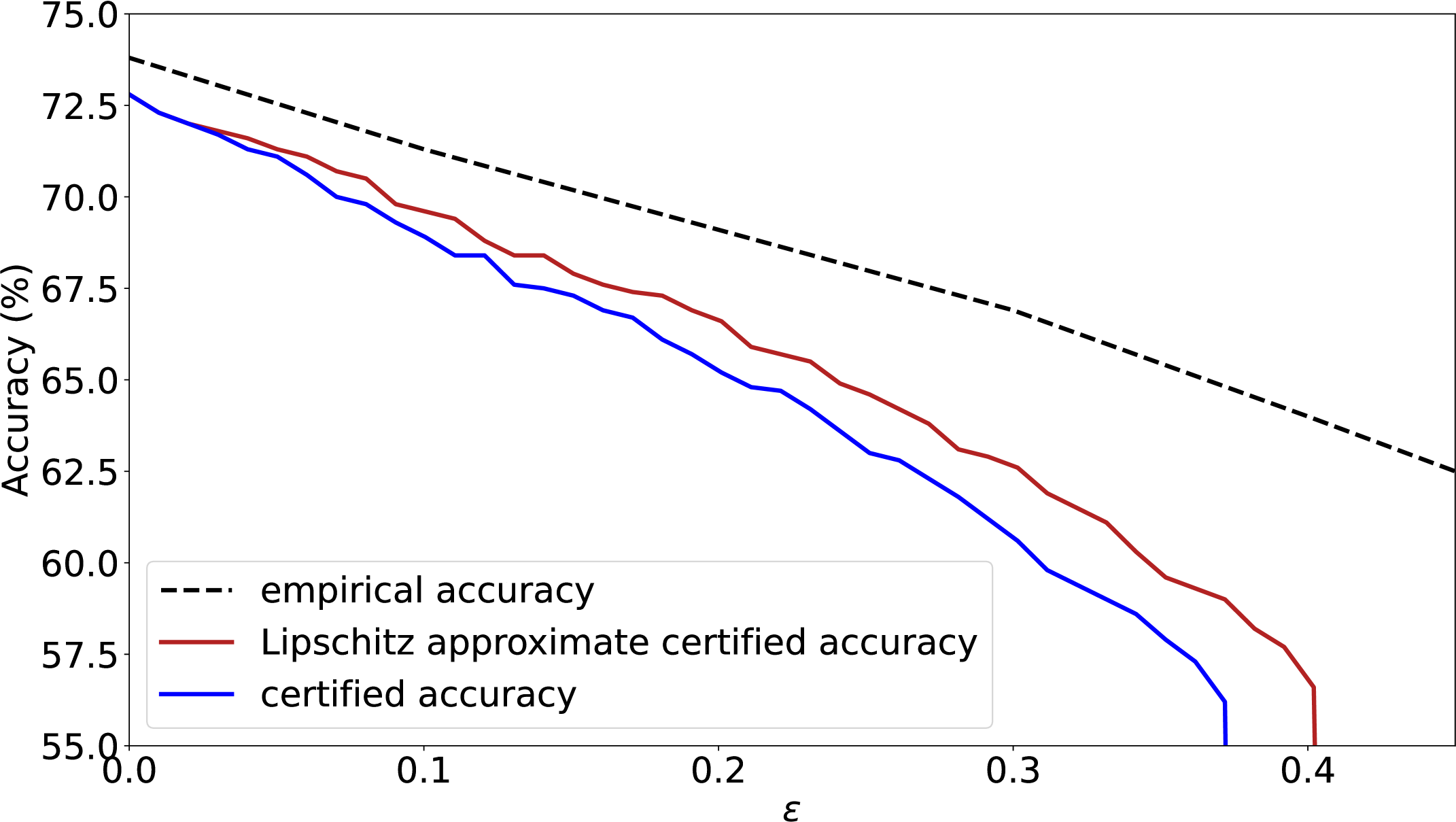}
    \caption{LiResNet trained with noise injection ($\sigma=0.5$), on CIFAR-10 dataset. Certified accuracy $R_{\mathrm{multi}}$ vs Lipschitz estimated certified accuracy $R_{\mathrm{multiLip}}$ vs empirical robust accuracy using projected gradient ascent. Randomized smoothing noise is taken as ($\sigma=0.12$).}
    \label{fig:acc_certif_vs_empirical_vs_lip_liresnet}
\end{figure}
% To $R_\mathrm{multiLip}$, we take $\rho=0$ in the local Lispchitz formula as an approximation, 
To compute \( R_{\mathrm{multiLip}} \), we approximate the local Lipschitz constant by setting \( \rho=0 \) in $L\left( \Phi^{-1} \circ \tilde{F}, B(\vx, \rho) \right) $, as exact this local Lipschitz estimates is difficult to compute exactly.  The reported certified accuracy in Figure~\ref{fig:acc_certif_vs_empirical_vs_lip_liresnet} should be interpreted as an approximation to explain the gap rather than an exact guarantee.
We used LiResNet \citep{hu2023scaling}, a state-of-the-art Lipschitz network for certified robustness, trained on the CIFAR-10 dataset. We compared the standard certified robustness obtained with the traditional RS procedure using \( R_{\mathrm{multi}} \), and our new Lipschitz-adjusted radius, \( R_{\mathrm{multiLip}} \), both employing the standard Bonferroni correction and empirical Bernstein confidence bound \citep{maurer_empirical_2009}.
For empirical assessment, we conducted empirical evaluations using a projected gradient descent (PGD) \( \ell_2 \) attack \citep{madry2018towards} with 40 iterations and a step size of 0.2. This setup highlights the potential improvements in certified robustness with the Lipschitz-adjusted radius.

In Figure~\ref{fig:acc_certif_vs_empirical_vs_lip_liresnet}, we observe that the certified radius obtained with the Lipschitz continuity assumption, \( R_{\mathrm{multiLip}} \), is closer to the empirical robust accuracy (accuracy on the test set under  PGD \( \ell_2 \) attack). 
Incorporating the Lipschitz assumption refines the certified bound, better approximating robustness. 
The gap between standard RS-certified accuracy and empirical robustness stems partly from ignoring Lipschitz's continuity. 
While \( R_{\mathrm{multiLip}} \) narrows this gap, it remains an approximation due to the challenge of computing exact local Lipschitz constants, 
underscoring the role of smoothness in bridging theory and practice.
%
% This indicates that incorporating the Lipschitz assumption not only tightens the certified bound but also aligns more closely with the actual robustness observed in practice, compared to the standard certified radius \( R_{\mathrm{multi}} \).
%
Ideally, the empirical robust accuracy evaluation should involve attacking the smoothed classifier directly as in \cite{salman_provably_2020} or consider stronger attacks than PGD. As a result, the empirical robustness reported is likely to be slightly overestimated, since attacking the smoothed classifier is typically more challenging and would yield lower robustness values. 

\subsection{Lipschitz bound of neural networks}

Neural networks used in RS are theoretically Lipschitz continuous, but estimating a meaningful Lipschitz constant is challenging. 
A common approach to obtain a cheap upper bound is using the product of each layer's Lipschitz constants. Let \( l_i \) denote  the \( i \)-th layer, then the product upper bound ($\mathrm{PUB}$) for the Lipschitz constant of the network \( F \) is given by:
$\text{L}(F) \leq \mathrm{PUB}(F) = \prod_{i=1}^{m} L(l_i)$,
where \( m \) is the total number of layers in the network. While this bound is easy to compute, it can be extremely loose for deeper networks, as the actual local Lipschitz behavior may be much smaller than this product.
Lipschitz networks control the overall Lipschitz constant by design, using a product of upper bounds across layers and making it close to true Lipschitz. However, this global upper bound can be arbitrarily loose compared to the local Lipschitz constant around a given input point \(x\), particularly for standard architectures like ResNet-110. As a result, the certified radius derived using this global bound does not leverage the benefits of Lipschitz continuity, as it does not reflect the local behavior of the network near \(x\).

\begin{table}[h!]
\centering
\caption{Comparison of local Lipschitz constant estimation (Local Lip.) and product upper bound (PUB) for different models and noise levels (\( \sigma \)) used during training.}
\begin{tabular}{@{}lcc@{}}
\toprule
\textbf{Model}                      & \textbf{Local Lip.} & \textbf{PUB} \\ \midrule
LiResNet (\( \sigma = 0.00 \))      & 22                  & 37                        \\
LiResNet (\( \sigma = 0.12 \))      & 11                  & 13                        \\
LiResNet (\( \sigma = 0.25 \))      & 7.9                 & 9                         \\
LiResNet (\( \sigma = 0.50 \))      & 4.8                 & 6                         \\ \midrule
ResNet-110 (\( \sigma = 0.00 \))    & 235                 & \( 2.34 \times 10^{10} \) \\
ResNet-110 (\( \sigma = 0.12 \))    & 27                  & \( 1.03 \times 10^{12} \) \\
ResNet-110 (\( \sigma = 0.25 \))    & 25                  & \( 3.19 \times 10^{12} \) \\
ResNet-110 (\( \sigma = 0.50 \))    & 19                  & \( 9.71 \times 10^{9} \)  \\
ResNet-110 (\( \sigma = 1.0 \))     & 3.8                 & \( 1.32 \times 10^{11} \) \\ 
\bottomrule
\end{tabular}
\label{tab:lip_comparison}
\end{table}

Table~\ref{tab:lip_comparison} illustrates the significant discrepancy between local Lipschitz estimates and the $\mathrm{PUB}$, especially for ResNet-110, where the global bound becomes impractically loose. This gap underscores the limitations of relying on PUB for certification, motivating the need for research focused on deriving certified bounds for local Lipschitz constants.
To explain why the empirical radius is often much larger than the theoretical one, we estimate the local Lipschitz constant \(L(F, \mathcal{B}(\vx, \epsilon)\) with a radius of \( \epsilon =0.15 \) with gradient ascent. This local approach better captures the network's behavior near \(x\), providing a more accurate estimation of the certified radius. However, since local estimates do not guarantee a strict upper bound, they cannot be directly used for certification.
Training with noise injection helps to reduce the local Lipschitz constant, making the network smoother and more robust. For example, increasing noise levels \( \sigma \) significantly decreases the Lipschitz constant, as shown in Table~\ref{tab:lip_comparison}. This reduction not only smooths the decision boundaries but also reduces the classifier's variance, leading to tighter MC confidence intervals as shown in \cite{delattre2024lipschitz}.

Establishing rigorous methods for certified local Lipschitz bounds~\citep{huang2021local} is crucial for enhancing the certified robustness of neural networks and aligning theoretical guarantees with empirical observations. 
Also, diffusion models and denoising methods have been explored to reduce the Lipschitz constant, thereby improving certified robustness \citep{salman_denoised_2020, carlini_certified_2023}.
\section{CONCLUSION}
We addressed the gap between empirical and certified robustness in randomized smoothing by introducing a new certified radius based on Lipschitz continuity and an efficient class partitioning method for exact coverage confidence intervals. Our findings highlight that local Lipschitz continuity and local certified estimation are crucial for producing tighter and more reliable robustness certificates. 
%These contributions pave the way for more accurate certification in adversarial machine learning and related fields.

\section{ACKNOWLEDGMENTS}
This work was performed using HPC resources from GENCI- IDRIS (Grant 2023-AD011014214R1) and funded by the French National Research Agency (ANR SPEED-20-CE23-0025). This work received funding from the French Government via the program France 2030 ANR-23-PEIA-0008, SHARP.

\bibliography{ref}
\bibliographystyle{plainnat}

%%%%%%%%%%%%%%%%%%%%%%%%%%%%%%%%%%%%%%%%%%%%%%%%%%%%%%%%%%%%
\section*{Checklist}

% % %%% BEGIN INSTRUCTIONS %%%
% The checklist follows the references. For each question, choose your answer from the three possible options: Yes, No, Not Applicable.  You are encouraged to include a justification to your answer, either by referencing the appropriate section of your paper or providing a brief inline description (1-2 sentences). 
% Please do not modify the questions.  Note that the Checklist section does not count towards the page limit. Not including the checklist in the first submission won't result in desk rejection, although in such case we will ask you to upload it during the author response period and include it in camera ready (if accepted).

% \textbf{In your paper, please delete this instructions block and only keep the Checklist section heading above along with the questions/answers below.}
% % %%% END INSTRUCTIONS %%%

 \begin{enumerate}

 \item For all models and algorithms presented, check if you include:
 \begin{enumerate}
   \item A clear description of the mathematical setting, assumptions, algorithm, and/or model. Yes
   \item An analysis of the properties and complexity (time, space, sample size) of any algorithm. Yes
   \item (Optional) Anonymized source code, with specification of all dependencies, including external libraries. Yes, will be available upon acceptance.
 \end{enumerate}

 \item For any theoretical claim, check if you include:
 \begin{enumerate}
   \item Statements of the full set of assumptions of all theoretical results. Yes
   \item Complete proofs of all theoretical results. Yes
   \item Clear explanations of any assumptions. Yes
 \end{enumerate}

 \item For all figures and tables that present empirical results, check if you include:
 \begin{enumerate}
   \item The code, data, and instructions needed to reproduce the main experimental results (either in the supplemental material or as a URL). Yes, will be available upon acceptance.
   \item All the training details (e.g., data splits, hyperparameters, how they were chosen). Yes
         \item A clear definition of the specific measure or statistics and error bars (e.g., with respect to the random seed after running experiments multiple times). Not Applicable
         \item A description of the computing infrastructure used. (e.g., type of GPUs, internal cluster, or cloud provider). Yes
 \end{enumerate}

 \item If you are using existing assets (e.g., code, data, models) or curating/releasing new assets, check if you include:
 \begin{enumerate}
   \item Citations of the creator If your work uses existing assets. Yes
   \item The license information of the assets, if applicable. Not Applicable
   \item New assets either in the supplemental material or as a URL, if applicable. Not Applicable
   \item Information about consent from data providers/curators. Not Applicable
   \item Discussion of sensible content if applicable, e.g., personally identifiable information or offensive content. Not Applicable
 \end{enumerate}

 \item If you used crowdsourcing or conducted research with human subjects, check if you include:
 \begin{enumerate}
   \item The full text of instructions given to participants and screenshots. Not Applicable
   \item Descriptions of potential participant risks, with links to Institutional Review Board (IRB) approvals if applicable. Not Applicable
   \item The estimated hourly wage paid to participants and the total amount spent on participant compensation. Not Applicable
 \end{enumerate}

 \end{enumerate}
\newpage
\appendix

% If your paper is accepted, change the options for the package
% aistats2025 as follows:
%
%\usepackage[accepted]{aistats2025}
%
% This option will print headings for the title of your paper and
% headings for the authors names, plus a copyright note at the end of
% the first column of the first page.

% If you set papersize explicitly, activate the following three lines:
%\special{papersize = 8.5in, 11in}
%\setlength{\pdfpageheight}{11in}
%\setlength{\pdfpagewidth}{8.5in}

% If you use natbib package, activate the following three lines:
%\usepackage[round]{natbib}
%\renewcommand{\bibname}{References}
%\renewcommand{\bibsection}{\subsubsection*{\bibname}}

% If you use BibTeX in apalike style, activate the following line:
%\bibliographystyle{apalike}

% If your paper is accepted and the title of your paper is very long,
% the style will print as headings an error message. Use the following
% command to supply a shorter title of your paper so that it can be
% used as headings.
%
%\runningtitle{I use this title instead because the last one was very long}

% If your paper is accepted and the number of authors is large, the
% style will print as headings an error message. Use the following
% command to supply a shorter version of the authors names so that
% they can be used as headings (for example, use only the surnames)
%
%\runningauthor{Surname 1, Surname 2, Surname 3, ...., Surname n}

% Supplementary material: To improve readability, you must use a single-column format for the supplementary material.
\onecolumn

\section{Appendix}

\subsection{Class partitioning method}
We describe the to perform the Class Partitioning Method.
\begin{algorithm}
    \caption{Class partitioning method (CPM) for optimized Bonferroni correction}
    \label{algo:bucket_algo}
    \begin{algorithmic}[1]
        \Require Initial sample size \( n_0 \), estimation sample size \( n \), risk level \( \alpha \)
        %, radius \( R \)
        \State \textbf{Initialize:} Class set \( \mathcal{C} = \{1, 2, \dotsc, c\} \)

        \Statex

        \State \textbf{Initial Sampling Phase:}

        \State Draw \( n_0 \) samples and compute the selection counts \( C_{\mathrm{select}}[i] \) for each class \( i \in \mathcal{C} \)

        \State \( I_1 \gets \arg\max_{i \in \mathcal{C}} C_{\mathrm{select}}[i] \) \Comment{Index of the class with the highest count}

        \State \( I_2 \gets \arg\max_{i \in \mathcal{C} \setminus \{I_1\}} C_{\mathrm{select}}[i] \) \Comment{Index of the class with the second-highest count}

        \State Initialize meta-class \( M \gets \mathcal{C} \setminus \{I_1, I_2\} \)

        \State Define \( \mathcal{C}_{\mathrm{attack}} \gets \{I_2, M\} \)

        \While{ \( \sum_{i \in M} C_{\mathrm{select}}[i] > C_{\mathrm{select}}[I_2] \) }
        \State \( k \gets \arg\max_{i \in M} C_{\mathrm{select}}[i] \)
        % \State Remove class \( k \) from \( M \)
        % \State Add class \( k \) to \( \mathcal{C}_{\mathrm{attack}} \)
        \State Move class \( k \) from \( M \) to \( \mathcal{C}_{\mathrm{attack}} \)
        \EndWhile

        \State Let \( \mathcal{P} \) be the partitioning to form \( \mathcal{C}_{\mathrm{attack}} \)

        \Statex

        \State \textbf{Estimation Phase:}

        \State Draw \( n \) samples and compute the selection counts \( C_{\mathrm{estim}}[i] \) for each class \( i \in \mathcal{C} \)

        \State Apply partitioning \( \mathcal{P} \) to \( C_{\mathrm{estim}} \) to obtain bucket counts

        \State Compute \( \hat{\vp}_{I_1} = \dfrac{C_{\mathrm{estim}}[I_1]}{n} \)

        \For{ each bucket \( B \) in \( \mathcal{C}_{\mathrm{attack}} \) }
        \If{ \( B \) is a single class \( k \) }
        \State Compute \( \hat{\vp}_k = \dfrac{C_{\mathrm{estim}}[k]}{n} \)
        \Else \Comment{ \( B \) is meta-class \( M \) }
        \State Compute \( \hat{\vp}_M = \dfrac{\sum_{i \in M} C_{\mathrm{estim}}[i]}{n} \)
        \EndIf
        \EndFor

        \State Compute the total number of buckets \( c^\star = |\mathcal{C}_{\mathrm{attack}}| + 1 \)

        \State Compute the adjusted significance level \( \alpha' = \dfrac{\alpha}{c^\star} \)

        \State Compute the lower confidence bound \( \underline{\hat{\vp}}_{I_1} \) with risk \( \alpha' \)

        \For{ each bucket \( B \) in \( \mathcal{C}_{\mathrm{attack}} \) }
        \State Compute the upper confidence bound \( \overline{\hat{\vp}}_B \) with risk \( \alpha' \)
        \EndFor

        \State Let \( \overline{\hat{\vp}}_{\max} = \max_{B \in \mathcal{C}_{\mathrm{attack}}} \overline{\hat{\vp}}_B \)

        \Statex

        \State \textbf{Output:} Robustness radius \( R(\underline{\hat{\vp}}_{I_1}, \overline{\hat{\vp}}_{\max}) \)
    \end{algorithmic}
\end{algorithm}

\textbf{In the initial sampling phase}, the algorithm begins by drawing an initial sample of size \( n_0 \) and computing the selection counts \( C_{\mathrm{select}}[i] \) for each class \( i \) in the set \( \mathcal{C} = \{1, 2, \dotsc, c\} \). It identifies \( I_1 \) as the class with the highest selection count and \( I_2 \) as the class with the second-highest count. The remaining classes are grouped into a meta-class \( M = \mathcal{C} \setminus \{I_1, I_2\} \), and the set of classes to attack, \( \mathcal{C}_{\mathrm{attack}} \), is initialized as \( \{I_2, M\} \). The algorithm then refines this partitioning by iteratively moving the class \( k \) with the highest count from \( M \) to \( \mathcal{C}_{\mathrm{attack}} \) whenever the total count of \( M \) exceeds that of \( I_2 \). This process continues until the counts are balanced, resulting in a partitioning \( \mathcal{P} \) that effectively reduces the number of comparisons by grouping less significant classes.

\textbf{In the estimation phase}, the algorithm draws an additional sample of size \( n \) and computes the selection counts \( C_{\mathrm{estim}}[i] \) for each class \( i \). It then applies the partitioning \( \mathcal{P} \) from the initial phase to group these counts into buckets. The empirical probabilities are calculated for the most probable class \( \hat{\vp}_{I_1} = \frac{C_{\mathrm{estim}}[I_1]}{n} \) and for each bucket \( B \) in \( \mathcal{C}_{\mathrm{attack}} \).
The total number of buckets is determined as \( {c^\star} = |\mathcal{C}_{\mathrm{attack}}| + 1 \), and the significance level is adjusted using the Bonferroni correction to \( \alpha' = \frac{\alpha}{{c^\star}} \).
The algorithm computes the lower confidence bound \( \underline{\hat{\vp}}_{I_1} \) for \( \hat{\vp}_{I_1} \) and the upper confidence bounds \( \overline{\hat{\vp}}_B \) for each bucket \( \hat{\vp}_B \) with the adjusted risk \( \alpha' \).
It identifies the maximum upper bound \( \overline{\hat{\vp}}_{\max} = \max_{B \in \mathcal{C}_{\mathrm{attack}}} \overline{\hat{\vp}}_B \).
Finally, it outputs the robustness radius \( \mathrm{R}(\underline{\hat{p}}_{I_1}, \overline{\hat{\vp}}_{\max}) \), which provides a measure of the classifier's robustness against adversarial attacks.

\subsection{Additional experiments on CPM}

We conduct additional experiments on certified prediction margin (CPM) using ResNet-110 for CIFAR-10 and ResNet-50 for ImageNet, both trained with noise injection. The training procedure follows the approach of Cohen et al. (2019), and all settings remain consistent with those described in Section 4.1 of the paper.

For CIFAR-10, we report certified accuracy for ResNet-110, trained with different noise standard deviations (0.12, 0.25, 0.5, 1), comparing confidence intervals from Pearson-Clopper, Bonferroni, and CPM. The results are illustrated in Figure~\ref{fig:cifar10_sigma_0.12}, Figure~\ref{fig:cifar10_sigma_0.25}, Figure~\ref{fig:cifar10_sigma_0.5}, and Figure~\ref{fig:cifar10_sigma_1.0}.

For ImageNet, we evaluate certified accuracy for ResNet-50, trained with different noise standard deviations(0.25, 1), using the same confidence intervals. The results are shown in Figure~\ref{fig:imagenet_sigma_0.25},and Figure~\ref{fig:imagenet_sigma_1.0}.
The figure for $\sigma=0.5$ is already presented in the main body.

The results indicate that for high variance in probability outputs, often associated with larger noise and higher entropy, Bonferroni performs better than Pearson-Clopper, while CPM achieves results comparable to Bonferroni. Conversely, when the probability outputs exhibit lower entropy, typically with smaller noise levels, Pearson-Clopper performs better, and CPM closely mimics Pearson-Clopper while also providing superior results in intermediate scenarios. 

We observed slightly better gains on ImageNet, likely due to the larger number of classes, where CPM has a more significant impact, as Bonferroni tends to be overly conservative in such settings.

\begin{figure*}
    \centering
    \includegraphics[width=0.7\linewidth]{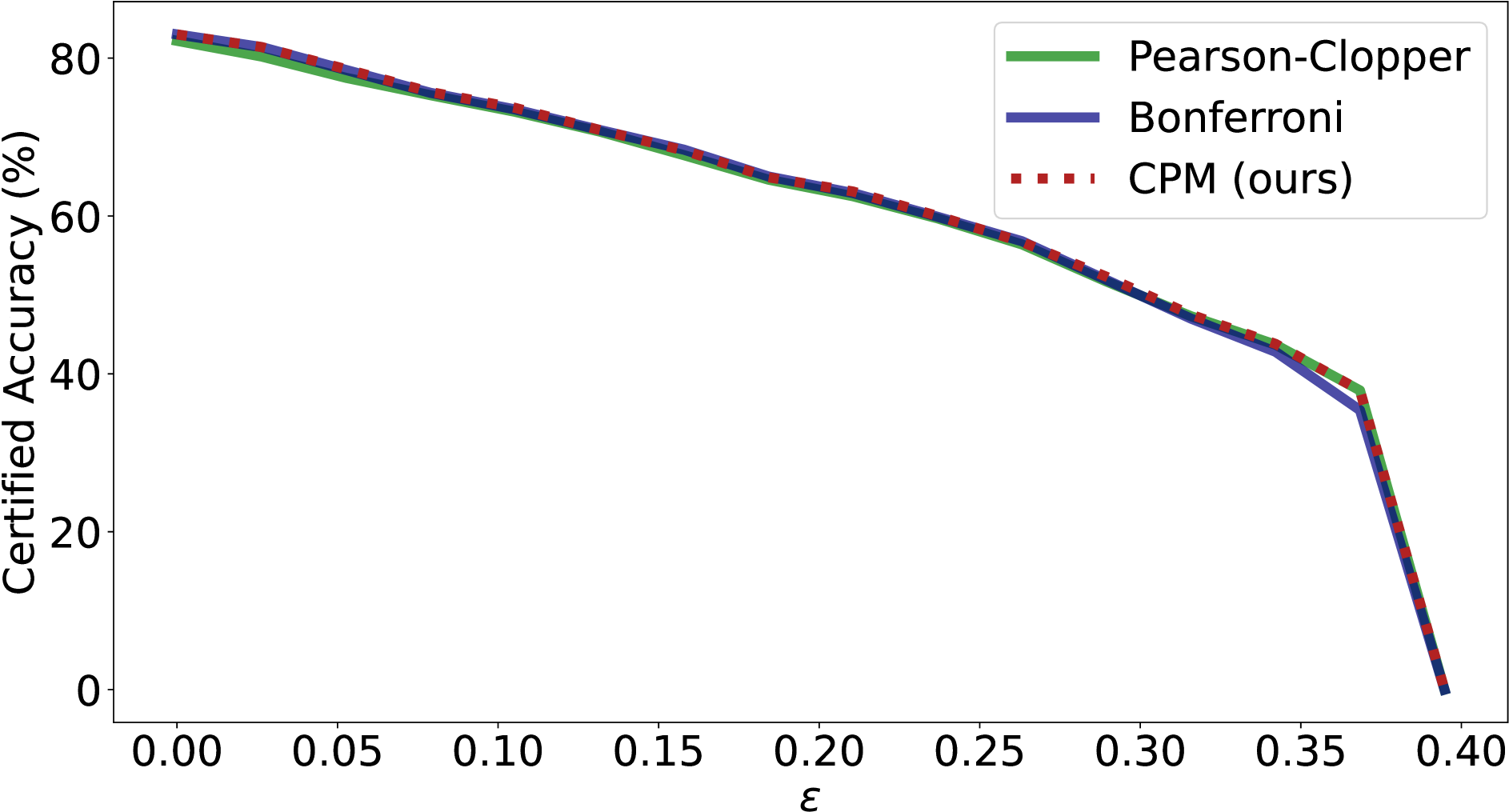}
    \caption{Comparison of various confidence interval methods for certified accuracy estimation with smoothing standard deviation $\sigma=0.12$ on the CIFAR-10 dataset.}
    \label{fig:cifar10_sigma_0.12}
\end{figure*}

\begin{figure*}
    \centering
    \includegraphics[width=0.7\linewidth]{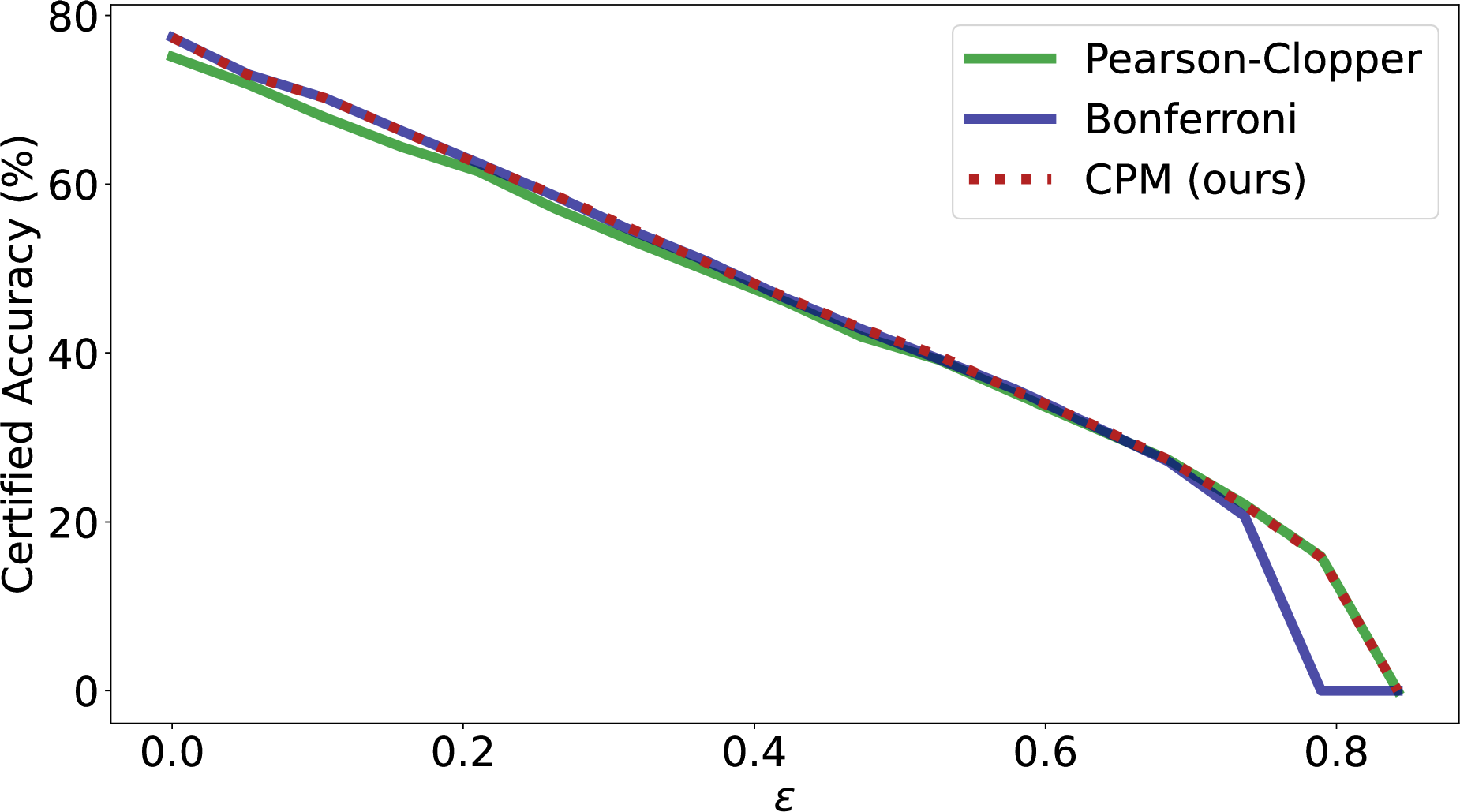}
    \caption{Comparison of various confidence interval methods for certified accuracy estimation with smoothing standard deviation $\sigma=0.25$ on the CIFAR-10 dataset.}
    \label{fig:cifar10_sigma_0.25}
\end{figure*}

\begin{figure*}
    \centering
    \includegraphics[width=0.7\linewidth]{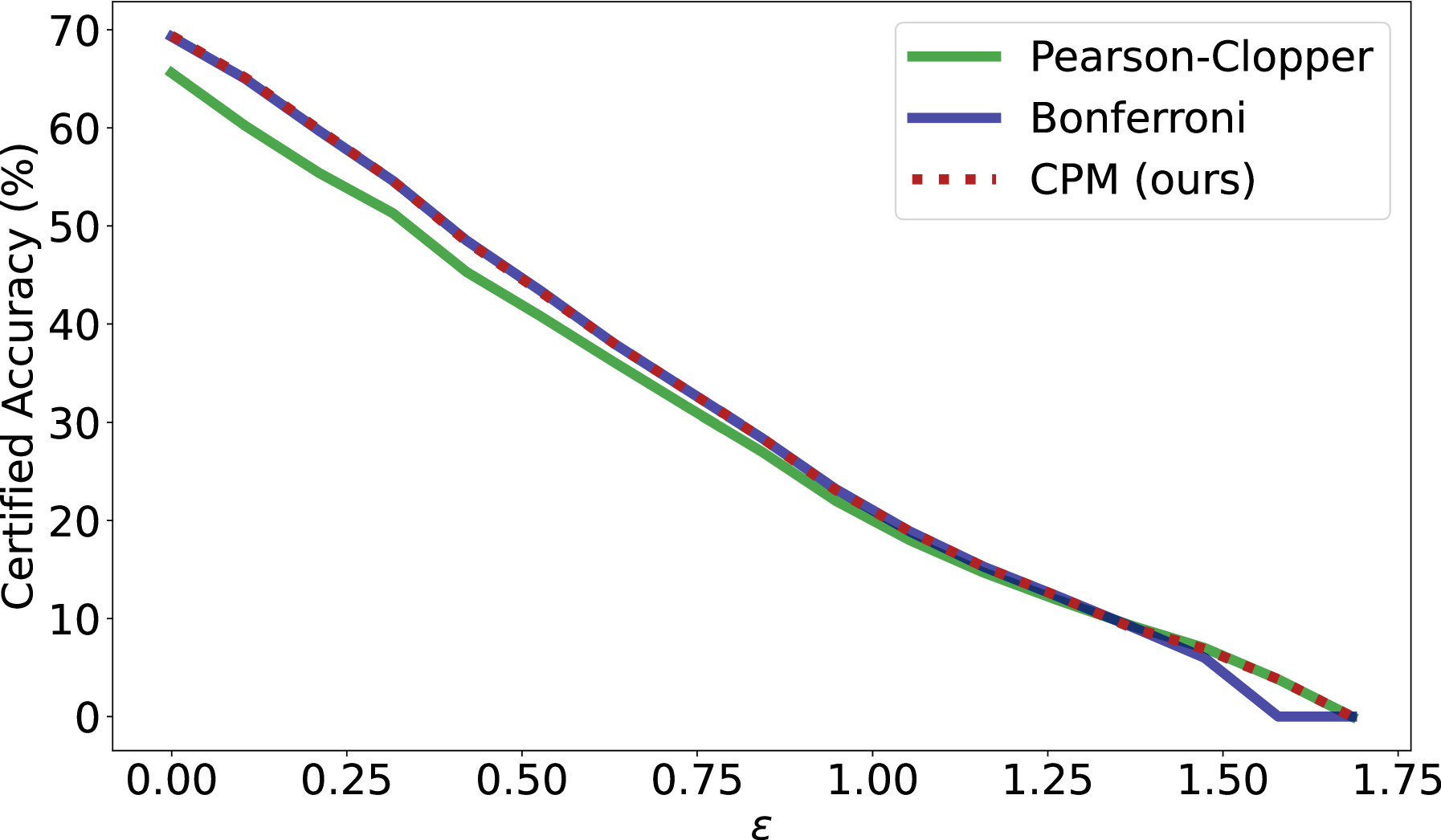}
    \caption{Comparison of various confidence interval methods for certified accuracy estimation with smoothing standard deviation $\sigma=0.5$ on the CIFAR-10 dataset.}
    \label{fig:cifar10_sigma_0.5}
\end{figure*}

\begin{figure*}
    \centering
    \includegraphics[width=0.7\linewidth]{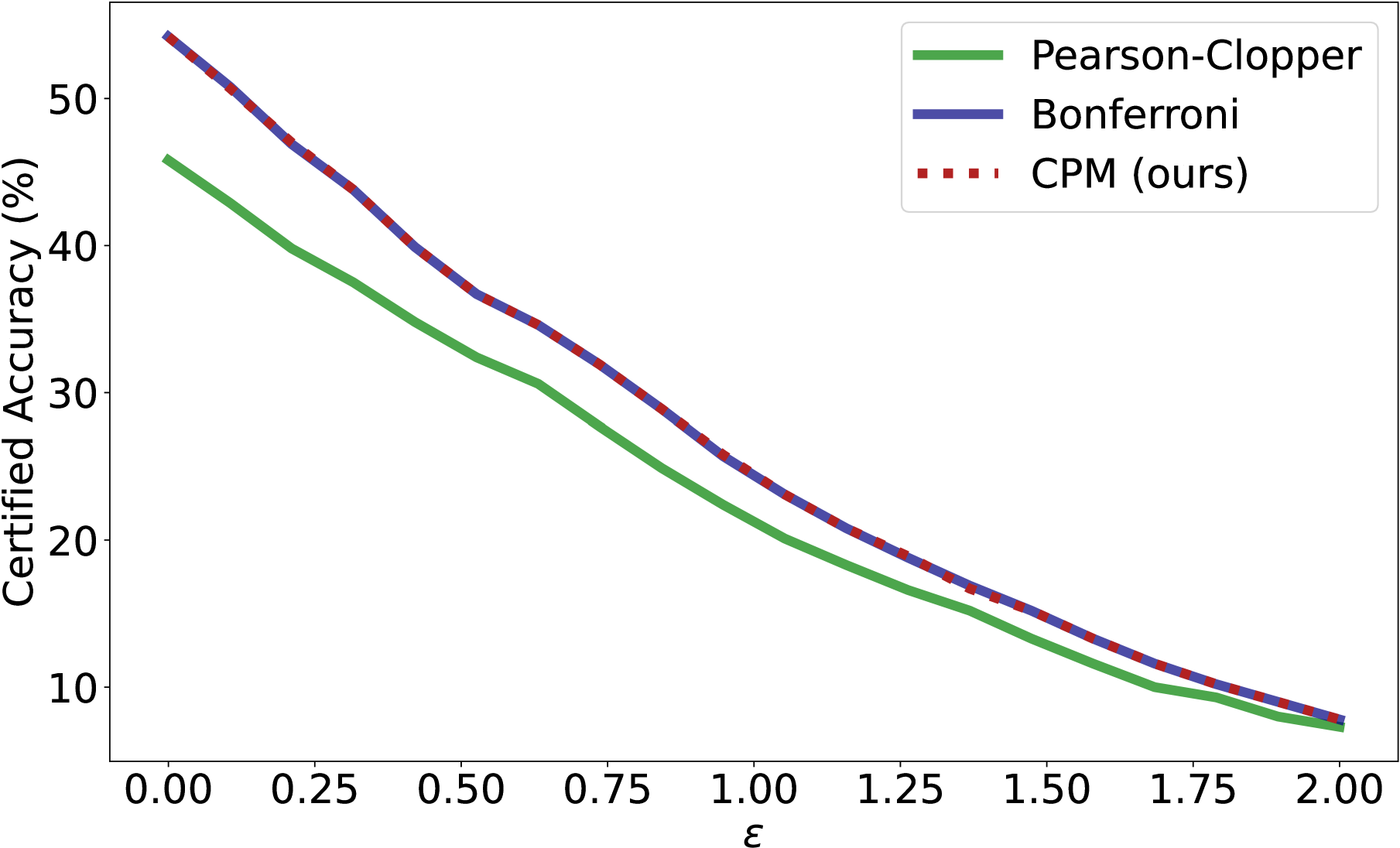}
    \caption{Comparison of various confidence interval methods for certified accuracy estimation with smoothing standard deviation $\sigma=1.0$ on the CIFAR-10 dataset.}
    \label{fig:cifar10_sigma_1.0}
\end{figure*}

\begin{figure*}
    \centering
    \includegraphics[width=0.7\linewidth]{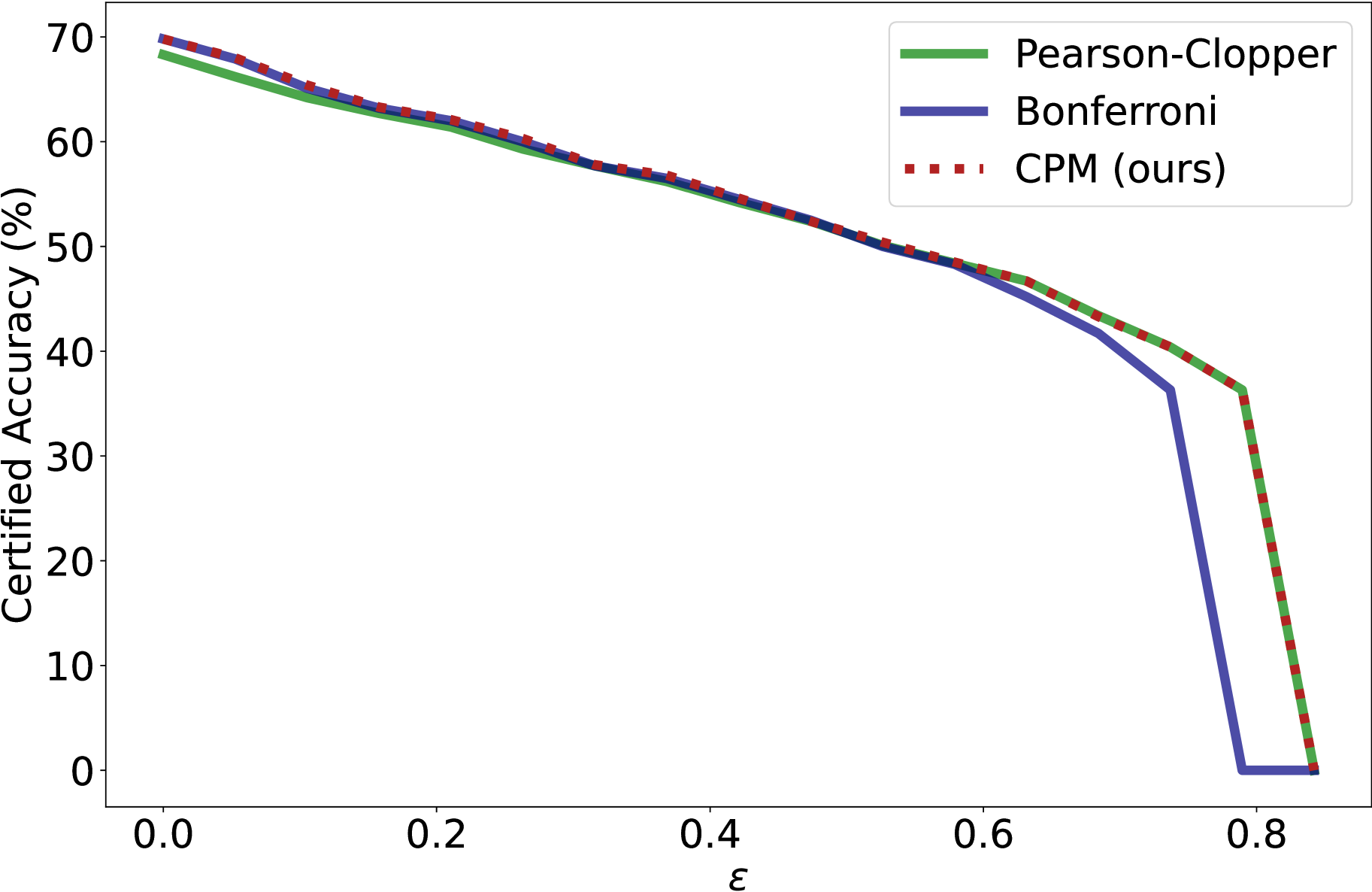}
    \caption{Comparison of various confidence interval methods for certified accuracy estimation with smoothing standard deviation $\sigma=0.25$ on the ImageNet dataset.}
    \label{fig:imagenet_sigma_0.25}
\end{figure*}

% \begin{figure*}
%     \centering
%     \includegraphics[width=0.7\linewidth]{}
%     \caption{Comparison of various confidence interval methods for certified accuracy estimation with smoothing standard deviation $\sigma=0.5$ on the ImageNet dataset.}
%     \label{fig:imagenet_sigma_0.5}
% \end{figure*}

\begin{figure*}
    \centering
    \includegraphics[width=0.7\linewidth]{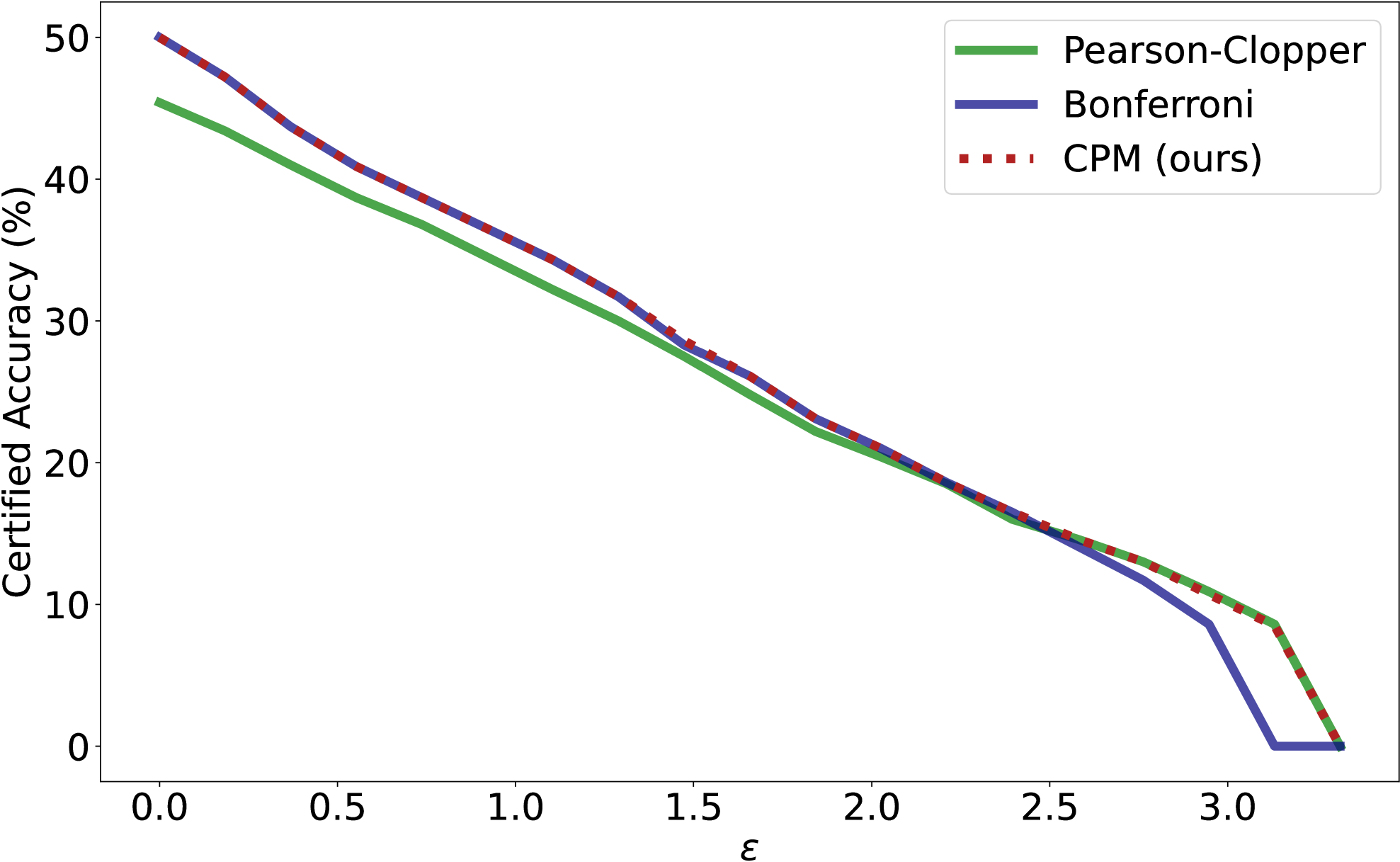}
    \caption{Comparison of various confidence interval methods for certified accuracy estimation with smoothing standard deviation $\sigma=1.0$ on the ImageNet dataset.}
    \label{fig:imagenet_sigma_1.0}
\end{figure*}

\newpage

\subsection{Counter example on why $\alpha / 2$ is erroneous}

% \definecolor{mygreen}{rgb}{0,0.6,0}
% \definecolor{mygray}{rgb}{0.5,0.5,0.5}
% \definecolor{mymauve}{rgb}{0.58,0,0.82}

% \lstset{
%     language=Python,
%     showstringspaces=false,
%     formfeed=newpage,
%     tabsize=4,
%     commentstyle=\color{mygreen},
%     keywordstyle=\color{blue},
%     stringstyle=\color{mymauve},
%     breaklines=true,
%     breakatwhitespace=false,
%     columns=fullflexible,
%     keepspaces=true,
%     basicstyle=\ttfamily\small,
%     numbers=left,
%     numberstyle=\tiny\color{mygray},
%     stepnumber=1,
%     numbersep=5pt,
% }

%\section*{Demonstration of Improper Bonferroni Correction}

This example illustrates how not properly applying the Bonferroni correction leads to incorrect confidence intervals and statistical inferences. The corresponding code can be found in the following repository: \href{https://github.com/blaisedelattre/bridging_the_gap_rs/blob/main/counter_example.py}{counter example}.

%\subsection*{Python Code}

% \begin{lstlisting}
% from math import comb
% from typing import Tuple, Callable

% from scipy.stats import norm
% from statsmodels.stats.proportion import proportion_confint

% def multinomial_event_probability(p: Tuple[float, float, float], n: int,
%                                   event: Callable[[Tuple[int, int, int]], bool]) -> float:
%     """
%     Calculates the probability of a given event under a multinomial distribution.

%     Parameters:
%     - p: Tuple of probabilities for each category (must sum to 1).
%     - n: Total number of trials.
%     - event: A callable that takes a tuple of counts and returns True if the event occurs.

%     Returns:
%     - Total probability of the event occurring.
%     """
%     total_probability = 0.0

%     # Iterate over all possible counts for x0
%     for x0 in range(n + 1):
%         # Iterate over all possible counts for x1, given x0
%         for x1 in range(n - x0 + 1):
%             # x2 is determined since x0 + x1 + x2 = n
%             x2 = n - x0 - x1
%             counts = (x0, x1, x2)
%             if event(counts):
%                 # Calculate the multinomial probability mass function (pmf)
%                 pmf = (
%                     comb(n, x0) * comb(n - x0, x1) *
%                     (p[0] ** x0) * (p[1] ** x1) * (p[2] ** x2)
%                 )
%                 total_probability += pmf

%     return total_probability

% def counterexample():
%     """
%     Demonstrates the impact of improperly applying the Bonferroni correction
%     on confidence intervals and the associated probabilities.
%     """
%     # Define the true probabilities for each category
%     p = (0.5, 0.05, 0.45)
%     n = 10  # Total number of trials
%     alpha = 0.05  # Significance level

%     print("Degenerate event")
%     def event_degenerate(x: Tuple[int, int, int]):
%         # Event where counts are strictly decreasing: x0 > x1 > x2
%         return x[0] > x[1] > x[2]

%     # Calculate the probability of the degenerate event
%     probability = multinomial_event_probability(p, n, event_degenerate)
%     print(f"Probability of the event: {probability:.4f}")

%     print("\nFirst Radius")

%     # Proper application of Bonferroni correction
%     def event_bonferroni(x: Tuple[int, int, int]):
%         # Compute confidence intervals with adjusted alpha
%         adjusted_alpha = 2 * alpha / 3  # Adjusted for three comparisons
%         p0_lower = proportion_confint(count=x[0], nobs=n, alpha=adjusted_alpha, method="beta")[0]
%         p1_upper = proportion_confint(count=x[1], nobs=n, alpha=adjusted_alpha, method="beta")[1]
%         p2_upper = proportion_confint(count=x[2], nobs=n, alpha=adjusted_alpha, method="beta")[1]
%         # Check if the minimum observed difference exceeds the confidence interval difference
%         return min((x[0] - x[1]) / n, (x[0] - x[2]) / n) > min(p0_lower - p1_upper, p0_lower - p2_upper)

%     print("Bonferroni Correction Applied")
%     probability = multinomial_event_probability(p, n, event_bonferroni)
%     print(f"Probability of the event: {probability:.4f}")
%     print(f"Theoretical minimum probability (1 - alpha): {1 - alpha:.4f}")
%     assert probability >= 1 - alpha  # Should pass if Bonferroni correction is properly applied

%     # Improper application without Bonferroni correction
%     def event_unadjusted(x: Tuple[int, int, int]):
%         # Compute confidence intervals without adjusting alpha
%         p0_lower = proportion_confint(count=x[0], nobs=n, alpha=alpha, method="beta")[0]
%         p1_upper = proportion_confint(count=x[1], nobs=n, alpha=alpha, method="beta")[1]
%         # Check if the observed difference exceeds the confidence interval difference
%         return min((x[0] - x[1]) / n, (x[0] - x[2]) / n) > p0_lower - p1_upper

%     print("\nNo Bonferroni Correction Applied")
%     probability = multinomial_event_probability(p, n, event_unadjusted)
%     print(f"Probability of the event: {probability:.4f}")
%     print(f"Theoretical minimum probability (1 - alpha): {1 - alpha:.4f}")
%     # The assertion is commented out because the probability may not meet the theoretical minimum
%     # assert probability >= 1 - alpha

%     print("\nSecond Radius")

%     # Proper application with Bonferroni correction in transformed space
%     def event_bonferroni_transformed(x: Tuple[int, int, int]):
%         adjusted_alpha = 2 * alpha / 3  # Adjusted for three comparisons
%         # Compute confidence intervals
%         p0_lower = proportion_confint(count=x[0], nobs=n, alpha=adjusted_alpha, method="beta")[0]
%         p1_upper = proportion_confint(count=x[1], nobs=n, alpha=adjusted_alpha, method="beta")[1]
%         p2_upper = proportion_confint(count=x[2], nobs=n, alpha=adjusted_alpha, method="beta")[1]
%         # Transform observed proportions using the inverse CDF of the normal distribution
%         z0 = norm.ppf(x[0] / n)
%         z1 = norm.ppf(x[1] / n)
%         z2 = norm.ppf(x[2] / n)
%         # Transform confidence bounds
%         z0_lower = norm.ppf(p0_lower)
%         z1_upper = norm.ppf(p1_upper)
%         z2_upper = norm.ppf(p2_upper)
%         # Check if the minimum observed difference exceeds the confidence interval difference
%         return min(z0 - z1, z0 - z2) > min(z0_lower - z1_upper, z0_lower - z2_upper)

%     print("Bonferroni Correction Applied in Transformed Space")
%     probability = multinomial_event_probability(p, n, event_bonferroni_transformed)
%     print(f"Probability of the event: {probability:.4f}")
%     print(f"Theoretical minimum probability (1 - alpha): {1 - alpha:.4f}")
%     assert probability >= 1 - alpha  # Should pass if Bonferroni correction is properly applied

%     # Improper application without Bonferroni correction in transformed space
%     def event_unadjusted_transformed(x: Tuple[int, int, int]):
%         # Compute confidence intervals without adjusting alpha
%         p0_lower = proportion_confint(count=x[0], nobs=n, alpha=alpha, method="beta")[0]
%         p1_upper = proportion_confint(count=x[1], nobs=n, alpha=alpha, method="beta")[1]
%         # Transform observed proportions
%         z0 = norm.ppf(x[0] / n)
%         z1 = norm.ppf(x[1] / n)
%         z2 = norm.ppf(x[2] / n)
%         # Transform confidence bounds
%         z0_lower = norm.ppf(p0_lower)
%         z1_upper = norm.ppf(p1_upper)
%         # Check if the observed difference exceeds the confidence interval difference
%         return min(z0 - z1, z0 - z2) > z0_lower - z1_upper

%     print("\nNo Bonferroni Correction Applied in Transformed Space")
%     probability = multinomial_event_probability(p, n, event_unadjusted_transformed)
%     print(f"Probability of the event: {probability:.4f}")
%     print(f"Theoretical minimum probability (1 - alpha): {1 - alpha:.4f}")
%     # The assertion is commented out because the probability may not meet the theoretical minimum
%     # assert probability >= 1 - alpha

% if __name__ == "__main__":
%     counterexample()
% \end{lstlisting}

The code defines a function to compute the probability of an event under a multinomial distribution and demonstrates the effect of applying and not applying the Bonferroni correction.

\textbf{Proper Bonferroni Correction}: Adjusts the significance level \( \alpha \) by dividing it by the number of comparisons (e.g., \( \alpha' = \alpha / k \)) to control the family-wise error rate.

\textbf{Improper Application}: Using the unadjusted \( \alpha \) fails to account for multiple comparisons, leading to narrower confidence intervals and an increased risk of Type I errors.

When the Bonferroni correction is properly applied, the calculated probability of the event meets or exceeds the theoretical minimum probability (\( 1 - \alpha \)), maintaining the desired confidence level.
Without the correction, the probability of the event may fall below the theoretical minimum, indicating that the confidence intervals are too narrow and do not provide the intended level of confidence.
Properly adjusting for multiple comparisons is crucial for valid statistical inference, especially when making simultaneous inferences about multiple parameters.
Ensuring accurate confidence intervals helps maintain the integrity of research findings and prevents the reporting of false positives.

\subsection{Proof bound on radius is conservative on true radius}

Since \( \Phi^{-1} \) is a strictly increasing function, we have:

\[
\Phi^{-1}( \hat{\vp}_{I_1} ) \geq \Phi^{-1}( \underline{\hat{\vp}_{I_1}} )
\quad \text{and} \quad
\Phi^{-1}( \hat{\vp}_{I_2} ) \leq \Phi^{-1}( \overline{\hat{\vp}_{I_2}} ).
\]

It follows that:

\[
\Phi^{-1}( \hat{\vp}_{I_1} ) - \Phi^{-1}( \hat{\vp}_{I_2} ) \geq \Phi^{-1}( \underline{\hat{\vp}_{I_1}} ) - \Phi^{-1}( \overline{\hat{\vp}_{I_2}} ).
\]

Let

\[
\mu = \Phi^{-1}( \hat{\vp}_{I_1} ) - \Phi^{-1}( \hat{\vp}_{I_2} ).
\]

Then, we obtain:

\[
R_{\text{mult}}(\vp) = \frac{\sigma}{2} \left( \Phi^{-1}( \vp_{i_1} ) - \Phi^{-1}( \vp_{i_2} ) \right) \geq \mu.
\]

This inequality holds with probability at least \( 1 - \alpha \).

\subsection{Proof of Theorem~\ref{thm:lip_local_rs}}

We recall Stein's lemma.

\begin{lemma} \label{lemma:stein}
(Stein's lemma \cite{stein_annals_1981}) \\
Let $\sigma > 0$, let $F : \mathbb{R}^d \mapsto \mathbb{R}$ be measurable, and let 
$\tilde{h}(x) = \mathbb{E}_{\delta\sim {\cal N}(0, \sigma^2 I)}[F(x + \delta)]$. Then $\tilde{F}$ is differentiable, and moreover,
$$ 
\nabla \tilde{F}(x) = \frac{1}{\sigma^2} \mathbb{E}_{\delta\sim {\cal N}(0, \sigma^2 I)}[ \delta F(x + \delta)] \ .
$$  
\end{lemma}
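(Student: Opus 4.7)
The plan is to recognize $\tilde F$ as a convolution of $F$ with the Gaussian density and then transfer the gradient from $F$ (which is only measurable) onto the smooth kernel by integration by parts / differentiation under the integral sign. Write $\tilde F(x) = \int_{\mathbb{R}^d} F(x+\delta)\,\phi_\sigma(\delta)\,d\delta$ where $\phi_\sigma(z) = (2\pi\sigma^2)^{-d/2}\exp\!\bigl(-\|z\|^2/(2\sigma^2)\bigr)$, and change variables $u = x+\delta$ to obtain $\tilde F(x) = \int_{\mathbb{R}^d} F(u)\,\phi_\sigma(u-x)\,du$. After this change of variables the $x$-dependence sits entirely inside the smooth, rapidly decaying kernel $\phi_\sigma(\cdot - x)$, which is where the regularity of $\tilde F$ will come from.

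Next, I would apply the Leibniz rule (dominated convergence) to interchange gradient and integral. A direct computation gives $\nabla_x \phi_\sigma(u-x) = \sigma^{-2}(u-x)\,\phi_\sigma(u-x)$, so if the exchange is justified,
\[
\nabla \tilde F(x) \;=\; \int_{\mathbb{R}^d} F(u)\,\frac{u-x}{\sigma^2}\,\phi_\sigma(u-x)\,du \;=\; \frac{1}{\sigma^2}\int_{\mathbb{R}^d} \delta\, F(x+\delta)\,\phi_\sigma(\delta)\,d\delta,
\]
after substituting back $\delta = u-x$; this is exactly the claimed identity $\nabla \tilde F(x) = \sigma^{-2}\,\mathbb{E}_{\delta\sim\mathcal{N}(0,\sigma^2 I)}[\delta\,F(x+\delta)]$.

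The main obstacle, and the only nontrivial step, is justifying the interchange of gradient and integral given that $F$ is merely assumed measurable. The standard route is to fix $x_0\in\mathbb{R}^d$ and a neighborhood $x\in B(x_0,r)$, then exhibit an integrable dominating function for the partial derivatives $\partial_{x_i}\bigl[F(u)\phi_\sigma(u-x)\bigr]$ uniformly in that neighborhood. In the randomized smoothing setting of interest we have $F(u)\in[0,1]$, so the integrand is bounded in absolute value by $\sigma^{-2}\|u-x\|\,\phi_\sigma(u-x)$, which after a shift is dominated by a Gaussian-moment integrable function independent of $x\in B(x_0,r)$. Dominated convergence then yields both the differentiability of $\tilde F$ at $x_0$ and the interchange.

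To wrap up cleanly, I would state and apply the Leibniz rule explicitly with the dominating function described above, note that the argument is pointwise in $x$ and that continuity of $x\mapsto \phi_\sigma(u-x)$ supplies the needed limit along any sequence $x_n\to x_0$, and then read off the final identity. (For completeness one can remark that an $L^1(\phi_\sigma)$ assumption on $F$ suffices in general; the bounded case relevant to classifiers is immediate.) No further machinery is needed, and the proof is essentially two lines once dominated convergence is in place.
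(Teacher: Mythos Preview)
The paper does not actually prove this lemma: it is merely \emph{recalled} as a cited result (``We recall Stein's lemma \cite{stein_annals_1981}'') and then invoked in the proof of Theorem~\ref{thm:lip_local_rs}. Your argument is the standard one---write $\tilde F$ as a convolution with the Gaussian kernel, push the gradient onto the smooth kernel via differentiation under the integral sign, and justify the interchange by dominated convergence using the boundedness of $F$ that holds in the paper's setting---and it is correct. There is nothing to compare against on the paper's side.
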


We recall the Pontryagin's maximum principle (PMP).

\begin{thm}[Pontryagin's Maximum Principle] \citep{pontryagin1962mathematical}
Consider the optimal control problem:

Minimize (or maximize) the cost functional
\[
J(u) = \int_{t_0}^{t_f} L(t, x(t), u(t)) \, dt + \Phi(x(t_f)),
\]
subject to the state (dynamical) equations
\[
\dot{x}(t) = f(t, x(t), u(t)), \quad x(t_0) = x_0,
\]
and the control constraints
\[
u(t) \in U \subset \mathbb{R}^m,
\]
where \( x(t) \in \mathbb{R}^n \), \( u(t) \in \mathbb{R}^m \), and \( t \in [t_0, t_f] \).

Assume that \( L \), \( f \), and \( \Phi \) are continuously differentiable with respect to their arguments and that an optimal control \( u^*(t) \) and corresponding state trajectory \( x^*(t) \) exist.

Then there exists an absolutely continuous costate (adjoint) function \( p(t) \in \mathbb{R}^n \) and a constant \( p_0 \leq 0 \) (for minimization) or \( p_0 \geq 0 \) (for maximization) such that the following conditions are satisfied almost everywhere on \( [t_0, t_f] \):

1. State equation:
   \[
   \dot{x}^*(t) = f\big(t, x^*(t), u^*(t)\big).
   \]

2. Costate equation:
   \[
   \dot{p}(t) = -\frac{\partial H}{\partial x}\big(t, x^*(t), u^*(t), p(t)\big),
   \]
   with terminal condition
   \[
   p(t_f) = \frac{\partial \Phi}{\partial x}\big(x^*(t_f)\big).
   \]
   The Hamiltonian \( H \) is defined by
   \[
   H(t, x, u, p) = L(t, x, u) + p^\top f(t, x, u).
   \]

Maximum principle (optimality condition):
   \[
   H\big(t, x^*(t), u^*(t), p(t)\big) = \max_{v \in U} H\big(t, x^*(t), v, p(t)\big).
   \]

Transversality conditions:
   If the final time \( t_f \) is free, then
   \[
   H\big(t_f, x^*(t_f), u^*(t_f), p(t_f)\big) = 0.
   \]

Non-triviality condition:
   \[
   (p_0, p(t)) \neq 0 \quad \text{for all} \quad t \in [t_0, t_f].
   \]
\end{thm}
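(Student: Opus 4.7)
The plan is to follow the classical variational proof of the maximum principle via needle (Weierstrass) variations, as developed by Pontryagin, Boltyanskii, and Gamkrelidze. Fix a candidate optimal pair $(x^*, u^*)$ on $[t_0, t_f]$. For a Lebesgue point $\tau \in (t_0, t_f]$ of $u^*$, a value $v \in U$, and small $\varepsilon > 0$, define the needle-perturbed control $u^\varepsilon$ equal to $v$ on $[\tau - \varepsilon, \tau]$ and equal to $u^*$ elsewhere. Using the $C^1$ regularity of $f$ together with Gronwall's inequality, the perturbed trajectory admits a first-order expansion $x^\varepsilon(t) = x^*(t) + \varepsilon\, y(t) + o(\varepsilon)$ on $[\tau, t_f]$, where the variation $y$ solves the linear variational equation
\[
\dot{y}(t) = \frac{\partial f}{\partial x}\bigl(t, x^*(t), u^*(t)\bigr)\, y(t),
\]
with jump initial condition $y(\tau) = f(\tau, x^*(\tau), v) - f(\tau, x^*(\tau), u^*(\tau))$.

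Next, I introduce the costate $p(t)$ as the solution of the adjoint equation
\[
\dot{p}(t) = -\frac{\partial f}{\partial x}(t, x^*, u^*)^\top p(t) - p_0\, \frac{\partial L}{\partial x}(t, x^*, u^*),
\]
with terminal condition $p(t_f) = p_0\, \nabla \Phi(x^*(t_f))$, chosen so that $t \mapsto \langle p(t), y(t)\rangle + p_0 \int_\tau^t \frac{\partial L}{\partial x}\, y\,ds$ transports the cost variation correctly. A direct computation then yields the first-order variation of $J$ produced by the needle:
\[
\delta J = -\varepsilon\,\bigl[\, H(\tau, x^*(\tau), v, p(\tau)) - H(\tau, x^*(\tau), u^*(\tau), p(\tau))\,\bigr] + o(\varepsilon),
\]
where $H = L + p^\top f$ (with the appropriate sign of $p_0$).

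The core of the proof is then a convex separation argument. Considering convex combinations of multiple needles produces a convex cone $K \subset \mathbb{R}^{n+1}$ of first-order admissible perturbations of the joint vector (terminal state, cost). Optimality of $u^*$ means $K$ cannot meet the open half-space corresponding to strict cost improvement together with admissible endpoint constraints. By the finite-dimensional Hahn-Banach separation theorem, there exists a nonzero covector $(p_0, \lambda) \in \mathbb{R} \times \mathbb{R}^n$ separating $K$ from that half-space, with $p_0 \leq 0$ for minimization. Setting $p(t_f) = \lambda$ (after the usual identification via the transversality condition arising from $\Phi$) and propagating backward via the adjoint equation, the separation inequality rewrites, for a.e.\ $\tau$, as $H(\tau, x^*(\tau), v, p(\tau)) \leq H(\tau, x^*(\tau), u^*(\tau), p(\tau))$ for every $v \in U$, which is the maximum condition. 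Non-triviality of $(p_0, p(t))$ follows from non-degeneracy of the separating functional and uniqueness of solutions to the linear adjoint ODE.

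The main obstacle is the careful bookkeeping for the convex cone of reachable variations: one must show that multi-needle variations truly span a convex set whose first-order approximation is closed under positive combinations, which requires the Lebesgue-point/density machinery so that measurable (not merely piecewise continuous) controls are handled. A secondary, but important, technical point is the free-terminal-time case: I would handle it by the standard time-reparametrization trick, augmenting the state with $s(t) = t$ and an additional control governing $\dot{s}$, and then reapplying the fixed-time argument to the enlarged system; the extra component of the adjoint equation yields precisely the transversality condition $H(t_f, x^*(t_f), u^*(t_f), p(t_f)) = 0$.
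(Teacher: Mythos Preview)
The paper does not prove this theorem: it is stated with a citation to \citet{pontryagin1962mathematical} and used as a black box in the proof of Lemma~\ref{lemma:optimal_control_problem}. So there is no ``paper's own proof'' to compare against; your sketch of the classical needle-variation argument is the standard route, but it goes well beyond what the paper attempts or needs.
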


We are going to prove the following lemma

\begin{lemma}
\label{lemma:optimal_control_problem}
Given the optimal control problem:

Maximize the functional
\[
I(y) = \int_{-\infty}^{+\infty} s y(s) \phi(s) \, ds
\]
subject to the state equation
\[
\frac{d y(s)}{d s} = u(s),
\]
the control constraint
\[
|u(s)| \leq L,
\]
the state constraints
\[
0 \leq y(s) \leq 1,
\]
and the integral constraint
\[
\int_{-\infty}^{+\infty} y(s) \phi(s) \, ds = p,
\]
where \(\phi(s)\) is the standard normal probability density function (PDF),  \(\Phi(s)\) is the cumulative distribution function (CDF), and $p \in [0, 1]$.

The optimal control \(u^*(s)\) and the optimal state \(y^*(s)\) are given by:

When \( s \leq s_0 \):
   \[
   y^*(s) = 0, \quad u^*(s) = 0.
   \]

When \( s_0 < s < s_1 \):
   \[
   y^*(s) = L (s - s_0), \quad u^*(s) = L.
   \]

When \( s \geq s_1 \):
   \[
   y^*(s) = 1, \quad u^*(s) = 0.
   \]

Here, \( s_0 \) is determined by the equation
% \[
% \lambda = \frac{\phi(s_0)}{\Phi(s_0)},
% \].
% and \( s_1 = s_0 + \frac{1}{L} \).

% Expressions:
% The integral constraint \( p \) in terms of \( s_0 \), \( L \), and \( \Phi(s) \) is:
   % \[
   % p = 1 - L \int_{s_0}^{s_0 + \frac{1}{L}} \Phi(s) \, ds.
   % \]
%
\[
   p  - \left(1 - L \int_{s_0}^{s_0 + \frac{1}{L}} \Phi(s) \, ds \right) = 0.
\]
The optimal value of the objective functional \( I(y^*) \) is:
   \[
   I(y^*) = L [ \Phi(s_0 + \frac{1}{L}) - \Phi(s_0) ].
   \]
\end{lemma}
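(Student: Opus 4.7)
The plan is to apply Pontryagin's Maximum Principle (PMP), after absorbing the integral equality $\int y\phi\,ds = p$ via a scalar Lagrange multiplier $\lambda$, and then read off the three-piece ramp structure of the optimal trajectory from the resulting bang-bang switching rule.

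Concretely, I would form the augmented Hamiltonian $H(s, y, u, \psi) = (s - \lambda)\phi(s)\,y + \psi(s)\,u$, where $\psi(s)$ is the adjoint (renamed from the costate to avoid clashing with the probability $p$ in the lemma). The costate ODE $\dot\psi(s) = -(s - \lambda)\phi(s)$, together with the natural decay $\psi(s)\to 0$ as $s\to+\infty$, integrates using $\phi'(s) = -s\phi(s)$ to $\psi(s) = \phi(s) - \lambda\bigl(1 - \Phi(s)\bigr)$. The standard-normal hazard rate $h(s) = \phi(s)/(1-\Phi(s))$ is strictly increasing and onto $(0,\infty)$, so $\psi$ has a single sign change, from negative to positive, at the unique $s_0$ defined by $h(s_0) = \lambda$. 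Since $H$ is linear in $u$, PMP yields $u^*(s) = L\,\mathrm{sign}(\psi(s))$ in the interior of the feasible region; the state constraints $0 \leq y \leq 1$ then force $u^* = 0$ on $(-\infty, s_0]$ (because $y$ cannot dip below $0$) and $u^* = 0$ once $y$ reaches $1$, which happens after a ramp of length exactly $1/L$. This is precisely the three-piece structure in the lemma.

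To close the loop I would pin down $s_0$ by substituting the candidate $y^*$ into the integral constraint and integrating by parts, using the identity $\int_a^b (s-a)\phi(s)\,ds = (b-a)\Phi(b) - \int_a^b \Phi(s)\,ds$ with $a=s_0$, $b=s_0+1/L$. This reproduces $p = 1 - L\int_{s_0}^{s_0+1/L}\Phi(s)\,ds$, the defining equation for $s_0$. The same strategy evaluates the objective: from $s\phi(s) = -\phi'(s)$ and integration by parts, $\int y \cdot s\phi\,ds = \int u\,\phi\,ds$ (boundary terms vanish since $\phi(\pm\infty) = 0$ and $y$ is bounded), and substituting the optimal $u^* = L\,\mathbf{1}_{[s_0, s_0+1/L]}$ gives $I(y^*) = L\bigl[\Phi(s_0 + 1/L) - \Phi(s_0)\bigr]$.

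The main obstacle is the rigorous treatment of the state constraints $0\leq y\leq 1$, which strictly require the state-constrained version of PMP with a signed-measure multiplier supported on the active constraint set, combined with the fact that the horizon is infinite so the usual transversality conditions must be replaced by decay-at-infinity conditions on $\psi$. A cleaner alternative, which I would use if this PMP bookkeeping becomes too heavy, is to dualize directly at the level of the control: by the same integration by parts, both $I(y)$ and the integral constraint become linear functionals in $u = y'$, reducing the problem to a Neyman-Pearson-type linear program in $u \in [-L, L]$ with one integral equality and the side condition $\int u\,ds \in [0,1]$. Monotonicity of the likelihood ratio $\phi(s)/(1 - \Phi(s))$ then forces the optimal $u$ to be $+L$ on a single interval of length $1/L$ and $0$ elsewhere, recovering the same ramp and bypassing the state-constrained PMP entirely.
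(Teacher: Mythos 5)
Your proposal follows essentially the same route as the paper's proof: PMP with a scalar multiplier $\lambda$ for the integral constraint, a bang-bang analysis yielding the ramp $y^*(s)=L(s-s_0)$ between the two regions where the state constraints are active, the switching condition $\phi(s_0)=\lambda\bigl(1-\Phi(s_0)\bigr)$, and the same integrations by parts to recover $p = 1 - L\int_{s_0}^{s_0+1/L}\Phi(s)\,ds$ and $I(y^*)=L\bigl[\Phi(s_0+1/L)-\Phi(s_0)\bigr]$. Your adjoint $\psi(s)=\phi(s)-\lambda\bigl(1-\Phi(s)\bigr)$ together with the hazard-rate monotonicity argument is in fact the sign-consistent version of the paper's costate computation, and the gap you acknowledge (rigorous state-constrained PMP on an infinite horizon) is treated no more rigorously in the paper, which relies on informal complementary-slackness multipliers.
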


\begin{proof}

To solve the optimal control problem, we apply the PMP with state constraints incorporated via Lagrange multipliers.

To formulate the Hamiltonian with state constraints,
we introduce Lagrange multipliers \(\mu_0(s) \geq 0\) and \(\mu_1(s) \geq 0\) for the state constraints \( y(s) \geq 0 \) and \( y(s) \leq 1 \), respectively. 

The augmented Hamiltonian is:
\[
H(s, y, p_y, u, \mu_0, \mu_1) = [s y - \lambda y] \phi(s) + p_y u + \mu_0 y + \mu_1 ( - y + 1 ),
\]
where \(\lambda\) is the Lagrange multiplier associated with the integral constraint.

Necessary Conditions from the PMP are:

State equation:
\[
\frac{d y(s)}{d s} = u(s).
\]

Costate equation:
\[
\frac{d p_y(s)}{d s} = -\frac{\partial H}{\partial y} = -[s - \lambda] \phi(s) - \mu_0(s) + \mu_1(s).
\]

Hamiltonian maximization condition:
\[
u^*(s) = \operatorname*{arg\,max}_{|u(s)| \leq L} [p_y(s) u(s)].
\]

Complementary slackness conditions:
\begin{align*}
\mu_0(s) \geq 0, \quad y(s) \geq 0, \quad \mu_0(s) y(s) = 0, \\
\mu_1(s) \geq 0, \quad y(s) \leq 1, \quad \mu_1(s) [ y(s) - 1 ] = 0.
\end{align*}

% We analyze the optimal solution.
We consider three regions based on the value of \( y(s) \):

\paragraph{Region where \( y(s) = 0 \) (lower state constraint active):}

Complementary slackness: \( \mu_0(s) \geq 0 \), \( \mu_1(s) = 0 \).

Costate equation:
\[
\frac{d p_y(s)}{d s} = -[s - \lambda] \phi(s) - \mu_0(s).
\]
Admissible controls: \( u(s) \geq 0 \) (to prevent \( y(s) \) from decreasing below zero).

Optimal control: Since \( p_y(s) \leq 0 \) (due to the effect of \( \mu_0(s) \)), the Hamiltonian is maximized by \( u^*(s) = 0 \).

Resulting State: \( y(s) = 0 \).

\paragraph{Region where \( 0 < y(s) < 1 \) (state constraints inactive):}

Complementary slackness: \( \mu_0(s) = 0 \), \( \mu_1(s) = 0 \).

Costate equation:
\[
\frac{d p_y(s)}{d s} = -[s - \lambda] \phi(s).
\]

Solution of the costate equation:
\[
p_y(s) = -\int_s^{+\infty} [\sigma - \lambda] \phi(\sigma) \, d\sigma = \lambda Q(s) - \phi(s),
\]
where \( Q(s) = \int_s^{+\infty} \phi(\sigma) \, d\sigma \).

The optimal control is:
\[
u^*(s) = 
\begin{cases}
L, & \text{if } p_y(s) > 0, \\
-L, & \text{if } p_y(s) < 0.
\end{cases}
\]

We determining the switching point \( s_0 \):
\[
p_y(s_0) = 0 \implies \lambda Q(s_0) = \phi(s_0).
\]

Since \( p_y(s) > 0 \) for \( s > s_0 \), the optimal control in this region is \( u^*(s) = L \).

The resulting state is:
\[
y(s) = y(s_0) + L (s - s_0).
\]

\paragraph{Region where \( y(s) = 1 \) (upper state constraint active):}

Complementary slackness: \( \mu_0(s) = 0 \), \( \mu_1(s) \geq 0 \).

Costate equation:
\[
\frac{d p_y(s)}{d s} = -[s - \lambda] \phi(s) + \mu_1(s).
\]

Admissible controls: \( u(s) \leq 0 \) (to prevent \( y(s) \) from increasing above one).

Optimal control: since \( p_y(s) \geq 0 \) (due to the effect of \( \mu_1(s) \)), the Hamiltonian is maximized by \( u^*(s) = 0 \).

Resulting state: \( y(s) = 1 \).

Determining \( s_0 \) and \( s_1 \):

At \( s = s_0 \), \( y(s_0) = 0 \) (transition from \( y(s) = 0 \) to \( y(s) > 0 \)).
At \( y(s_1) = 1 \), \( s_1 = s_0 + \frac{1}{L} \).

Integral constraint:
\[
p = \int_{s_0}^{s_1} y(s) \phi(s) \, ds + \int_{s_1}^{+\infty} \phi(s) \, ds.
\]

Solving for \( \lambda \), \( s_0 \), and \( s_1 \) requires numerical methods due to the integrals involved.

By incorporating the state constraints into the Hamiltonian via Lagrange multipliers and applying the necessary conditions from the PMP, we derive the optimal control and state trajectories that satisfy all constraints:

Optimal Control \( u^*(s) \):
\[
u^*(s) = 
\begin{cases}
0, & s \leq s_0, \\
L, & s_0 < s < s_1, \\
0, & s \geq s_1.
\end{cases}
\]

Optimal state \( y^*(s) \):
\[
y^*(s) = 
\begin{cases}
0, & s \leq s_0, \\
L (s - s_0), & s_0 < s < s_1, \\
1, & s \geq s_1.
\end{cases}
\]

This solution ensures that the Hamiltonian is maximized over the admissible controls while satisfying the state and control constraints.

We will derive the expressions for \( p \) and \( I(y^*) \) in terms of \( s_0 \), \( L \), and the standard normal CDF \( \Phi(s) \).

\paragraph{Expression for \( p \):}

The integral constraint is:
\[
p = \int_{-\infty}^{+\infty} y^*(s) \phi(s) \, ds.
\]

Using the structure of \( y^*(s) \):
\[
y^*(s) = 
\begin{cases}
0, & s \leq s_0, \\
L (s - s_0), & s_0 < s < s_1, \\
1, & s \geq s_1.
\end{cases}
\]

We can split the integral into regions:
\[
p = \int_{s_0}^{s_1} y^*(s) \phi(s) \, ds + \int_{s_1}^{+\infty} y^*(s) \phi(s) \, ds.
\]

Compute each part:

first integral (\( p_1 \)):
  \[
  p_1 = \int_{s_0}^{s_1} L (s - s_0) \phi(s) \, ds = L \int_{s_0}^{s_1} (s - s_0) \phi(s) \, ds.
  \]

  We perform integration by parts:
  
  Let \( u(s) = (s - s_0) \), \( dv(s) = \phi(s) \, ds \).

  Then \( du(s) = ds \), \( v(s) = \Phi(s) \).

  Integration by parts gives:
  \[
  \int_{s_0}^{s_1} (s - s_0) \phi(s) \, ds = \left[ (s - s_0) \Phi(s) \right]_{s_0}^{s_1} - \int_{s_0}^{s_1} \Phi(s) \, ds.
  \]

  Evaluating the boundaries:
  \[
  (s_1 - s_0) \Phi(s_1) - (s_0 - s_0) \Phi(s_0) = \frac{1}{L} \Phi(s_1).
  \]

  Therefore,
  \[
  \int_{s_0}^{s_1} (s - s_0) \phi(s) \, ds = \frac{1}{L} \Phi(s_1) - \int_{s_0}^{s_1} \Phi(s) \, ds.
  \]

  Multiply both sides by \( L \):
  \[
  p_1 = \Phi(s_1) - L \int_{s_0}^{s_1} \Phi(s) \, ds.
  \]

Second integral (\( p_2 \)):
  \[
  p_2 = \int_{s_1}^{+\infty} 1 \cdot \phi(s) \, ds = 1 - \Phi(s_1).
  \]

Total \( p \):
  \[
  p = p_1 + p_2 = \Phi(s_1) - L \int_{s_0}^{s_1} \Phi(s) \, ds + 1 - \Phi(s_1) = 1 - L \int_{s_0}^{s_1} \Phi(s) \, ds.
  \]

\paragraph{Expression for \( I(y^*) \):}

The objective functional evaluated at \( y^*(s) \) is:
\[
I(y^*) = \int_{-\infty}^{+\infty} s y^*(s) \phi(s) \, ds.
\]

Again, split the integral:
\[
I(y^*) = \int_{s_0}^{s_1} s y^*(s) \phi(s) \, ds + \int_{s_1}^{+\infty} s y^*(s) \phi(s) \, ds.
\]

Compute each part:

  First integral (\( I_1 \)):
  \[
  I_1 = \int_{s_0}^{s_1} s [ L (s - s_0) ] \phi(s) \, ds = L \int_{s_0}^{s_1} s (s - s_0) \phi(s) \, ds.
  \]

  Expand \( s (s - s_0) = s^2 - s_0 s \):
  \[
  I_1 = L \left( \int_{s_0}^{s_1} s^2 \phi(s) \, ds - s_0 \int_{s_0}^{s_1} s \phi(s) \, ds \right).
  \]

  Recall standard integrals:
  \begin{align*}
  \int s^2 \phi(s) \, ds &= -s \phi(s) + \Phi(s), \\
  \int s \phi(s) \, ds &= -\phi(s).
  \end{align*}

  Evaluate the integrals:
  \begin{align*}
  \int_{s_0}^{s_1} s^2 \phi(s) \, ds &= [ -s \phi(s) + \Phi(s) ]_{s_0}^{s_1} = [ -s_1 \phi(s_1) + \Phi(s_1) ] - [ -s_0 \phi(s_0) + \Phi(s_0) ], \\
  \int_{s_0}^{s_1} s \phi(s) \, ds &= [ -\phi(s) ]_{s_0}^{s_1} = -\phi(s_1) + \phi(s_0).
  \end{align*}

  Substitute back into \( I_1 \):
  \begin{align*}
  I_1 &= L \left( [ -s_1 \phi(s_1) + \Phi(s_1) ] - [ -s_0 \phi(s_0) + \Phi(s_0) ] - s_0 ( -\phi(s_1) + \phi(s_0) ) \right) \\
      &= L \left( \Phi(s_1) - \Phi(s_0) - s_1 \phi(s_1) + s_0 \phi(s_0) + s_0 \phi(s_1) - s_0 \phi(s_0) \right) \\
      &= L \left( \Phi(s_1) - \Phi(s_0) - (s_1 - s_0) \phi(s_1) \right).
  \end{align*}

  Since \( s_1 - s_0 = \frac{1}{L} \):
  \[
  I_1 = L \left( \Phi(s_1) - \Phi(s_0) - \frac{1}{L} \phi(s_1) \right) = L [ \Phi(s_1) - \Phi(s_0) ] - \phi(s_1).
  \]

  Second integral (\( I_2 \)):
  \[
  I_2 = \int_{s_1}^{+\infty} s \cdot 1 \cdot \phi(s) \, ds.
  \]

  Recall that:
  \[
  \int_{s}^{+\infty} s \phi(s) \, ds = \phi(s).
  \]

  Therefore:
  \[
  I_2 = \phi(s_1).
  \]

  Total \( I(y^*) \):
  \[
  I(y^*) = I_1 + I_2 = \left( L [ \Phi(s_1) - \Phi(s_0) ] - \phi(s_1) \right) + \phi(s_1) = L [ \Phi(s_1) - \Phi(s_0) ].
  \]

Thus, the optimal value of the objective functional is:
\[
I(y^*) = L [ \Phi(s_1) - \Phi(s_0) ].
\]
\end{proof}

Now we can prove Theorem 2.

\begin{proof}

Let us assume \( \sigma = 1 \). We start by expressing the gradient of \( \phi^{-1}(\tilde{F}(\vx)) \):
\[
\nabla \phi^{-1}(\tilde{F}(\vx)) = \frac{\nabla \tilde{F}(\vx)}{\phi'(\phi^{-1}(\tilde{F}(\vx)))} \ .
\]

We aim to show that for any unit vector \( \vu \), the following inequality holds:
\[
\vu^\top \nabla \tilde{F}(\vx) \leq L(F) \left[ \Phi\left(s_0 + \frac{1}{L(F)}\right) - \Phi(s_0) \right],
\]
where \( s_0 \) is determined by:
\[
\tilde{F}(\vx) = 1 - L(F) \int_{s_0}^{s_0 + \frac{1}{L(F)}} \Phi(s) \, ds \ .
\]

\textbf{Applying Stein's Lemma}

Using Stein's lemma, we obtain the expression for \( \vu^\top \nabla \tilde{F}(\vx) \):
\[
\vu^\top \nabla \tilde{F}(\vx) = \mathbb{E}_{\delta \sim \mathcal{N}(0, \mI)} \left[ \vu^\top \delta F(\vx + \delta) \right],
\]
where \( \delta \) is a Gaussian vector with mean zero and identity covariance matrix. Our goal is to bound the maximum of this expression under the following constraints:

\begin{itemize}
    \item \( 0 \leq F(\vx) \leq 1 \)
    \item \( \mathbb{E}_{\delta \sim \mathcal{N}(0, \mI)} \left[ F(\vx + \delta) \right] = p \)
    \item \( F \) is Lipschitz continuous with a Lipschitz constant \( L(F) \).
\end{itemize}

Let \( y(z) = F(z + \vx) \). Then the problem can be recast as:
\[
(P) \quad \max_{||\vu|| = 1} \max_{y} \quad I(y) = \int_{\mathbb{R}^d} \vu^\top s \, y(s) \, \phi(s) \, ds
\]
subject to:
\[
L(y) \leq L(F), \quad 0 \leq y(s) \leq 1, \quad \int_{\mathbb{R}^d} y(s) \phi(s) \, ds = p.
\]

Without loss of generality, we can take \( \vu = (1, 0, \ldots, 0)^\top \), as Gaussian vectors are rotationally invariant and rotation preserves the \( \ell^2 \)-norm.

\textbf{Reducing to a One-Dimensional Problem}

Decompose \( \vs \) as \( \vs = (s_1, \vt) \), where \( s_1 = \vu^\top \vs \) and \( \vt = (\vs_2, \ldots, \vs_d) \). The density function \( \phi(\vs) \) can be factorized as:
\[
\phi(\vs) = \phi_1(s_1) \phi_{d-1}(\vt),
\]
where \( \phi_1 \) and \( \phi_{d-1} \) are the density functions of a standard Gaussian in one dimension and \( d-1 \) dimensions, respectively.

Rewrite the objective function as:
\[
I(y) = \int_{\mathbb{R}^{d-1}} \int_{\mathbb{R}} s_1 y(s_1, \vt) \phi_1(s_1) \phi_{d-1}(\vt) \, ds_1 \, d\vt.
\]

Define \( g(s_1) = \int_{\mathbb{R}^{d-1}} y(s_1, \vt) \phi_{d-1}(\vt) \, d\vt \). Then, the integral becomes:
\[
I(y) = I(g) = \int_{\mathbb{R}} s_1 g(s_1) \phi_1(s_1) \, ds_1.
\]

\textbf{Reformulating the Constraints for \( g(s_1) \)}

The integral constraint on \( p \) directly translates to:
\[
\int_{\mathbb{R}} g(s_1) \phi_1(s_1) \, ds_1 = p.
\]

Since \( 0 \leq y(\vs) \leq 1 \) for all \( \vs \in \mathbb{R}^d \), it follows that \( 0 \leq g(s_1) \leq 1 \) for all \( s_1 \in \mathbb{R} \).

For the Lipschitz condition, we have:
\[
|\nabla g(s_1)| \leq \int_{\mathbb{R}^{d-1}} |\nabla y(s_1, \vt)| \phi_{d-1}(\vt) \, d\vt \leq L(F).
\]

\textbf{Solving the One-Dimensional Problem}

The reduced problem is now:
\[
(Q) \quad \max_{g} \quad I(g) = \int_{\mathbb{R}} s_1 g(s_1) \phi_1(s_1) \, ds_1
\]
subject to:
\[
L(g) \leq L(F), \quad 0 \leq g(s_1) \leq 1, \quad \int_{\mathbb{R}} g(s_1) \phi_1(s_1) \, ds_1 = p.
\]

Applying Lemma~\ref{lemma:optimal_control_problem}, the optimal solution \( g^*(s_1) \) is given by:
\[
g^*(s_1) =
\begin{cases}
0, & s_1 \leq s_0, \\
L(F) (s_1 - s_0), & s_0 < s_1 < s_0 + \frac{1}{L(F)}, \\
1, & s_1 \geq s_0 + \frac{1}{L(F)}.
\end{cases}
\]

Here, \( s_0 \) is determined by the equation:
\begin{equation}
    \label{eq:compute_s0}
    \tilde{F}(\vx) = 1 - L(F) \int_{s_0}^{s_0 + \frac{1}{L(F)}} \Phi(s) \, ds.
\end{equation}

Evaluating \( I(g^*) \), we find:
\[
\vu^\top \nabla \tilde{F}(\vx) \leq L(F) \left[ \Phi\left(s_0 + \frac{1}{L(F)}\right) - \Phi(s_0) \right].
\]

For arbitrary \( \sigma \), the result follows by scaling the variables appropriately and performing a change of integration variable.

Finally,
\begin{align*}
    &L\left( \Phi^{-1} \circ \tilde{F}, B(\vx, \rho) \right) 
     = 
    &\sup_{\vx^\prime \in B(\vx, \rho)} \left\{ \frac{L(F) \left[ \Phi_\sigma\left(s_0(\vx^\prime) + \dfrac{1}{L(F)}\right) - \Phi_\sigma(s_0(\vx^\prime)) \right]}{
    \dfrac{1}{\sqrt{2\pi}} \exp\left( -\dfrac{1}{2} \left( \Phi^{-1}(\tilde{F}(\vx^\prime)) \right)^2 \right)
    } \right\} \
\end{align*}

\end{proof}

\begin{rmk}
The certified radius $ R_{\mathrm{monoLip}} $ is always larger than or equal to $ R_{\mathrm{mono}} $. This follows from the fact that the Lipschitz constant of the smoothed classifier depends on the value of the optimization problem $ I(g^*) $. A higher $ I(g^*) $ leads to a higher Lipschitz constant, which results in a smaller radius for the same margin.
\begin{proof}
The certified radius $ R $ is given by:
$$
R = \frac{\text{margin}}{\text{Lipschitz constant}}.
$$
For a fixed margin, a smaller Lipschitz constant increases the certified radius. The Lipschitz constant of the smoothed classifier $ \Phi^{-1} \circ \tilde{F} $ is:
$$
L = \sup_{x' \in B(x, \rho)} 
\sup_{ \norm{\vu} = 1}
\frac{ \vu^\top \nabla \tilde{F}(x')}{\phi\left(\Phi^{-1}(\tilde{F}(x'))\right)}.
$$
Here, the numerator $ \vu^\top \nabla \tilde{F}(x') $ is bounded by the value $ I(g^*) $ obtained from the optimization problem. For $ R_{\mathrm{mono}} $, the optimization problem $ (Q) $ is unconstrained, leading to a step function $ g^*(s_1) $ that maximizes $ I(g^*) $. For $ R_{\mathrm{monoLip}} $, the optimization problem $ (Q') $ includes a Lipschitz constraint $ L(g) \leq L(F) $, resulting in a smoother solution $ g^*(s_1) $ with a lower $ I(g^*) $.

Since $ I(g^*)_{Q'} \leq I(g^*)_{Q} $, it follows that:
$$
L_{\mathrm{monoLip}} \leq L_{\mathrm{mono}}.
$$

Thus, the certified radius satisfies:
$$
R_{\mathrm{monoLip}} \geq R_{\mathrm{mono}}.
$$

Even when considering the Lipschitz constant as the supremum over a ball around $ x $, the additional constraint in $ (Q') $ ensures that $ I(g^*) $ is smaller within the ball, making $ L_{\mathrm{monoLip}} $ consistently lower than $ L_{\mathrm{mono}} $. Therefore, the inequality $ R_{\mathrm{monoLip}} \geq R_{\mathrm{mono}} $ holds universally.
\end{proof}
\end{rmk}

\subsection{Computation of Product Upper Bound (PUB) for ResNet Architectures}

The Product Upper Bound (PUB) for the Lipschitz constant of a ResNet model is computed as the product of the Lipschitz constants of its individual layers. Below, we summarize how the Lipschitz constant is estimated for different components in the ResNet architecture.

For convolutional layers (Conv2D), the Lipschitz constant is the spectral norm of the layer, which corresponds to the largest singular value of the weight matrix. This is computed via power iteration applied directly to the convolution operator as in \citet{ryu2019plug}. The iteration alternates between forward and transposed convolutions until convergence.

For fully connected (dense) layers, the Lipschitz constant is also the spectral norm of the weight matrix. Power iteration is similarly applied, where random vectors are iteratively multiplied by the weight matrix and transpose to approximate the largest singular value.

Batch normalization layers (BatchNorm) are treated as affine transformations during inference. The Lipschitz constant is computed as:
\[
L_{\text{BN}} = \max_{i} \left| \frac{\gamma_i}{\sqrt{\sigma_i^2 + \epsilon}} \right|,
\]
where \( \gamma_i \) is the scale parameter, \( \sigma_i^2 \) is the running variance, and \( \epsilon \) is a small stability constant. This ensures the Lipschitz constant accurately reflects the effect of batch normalization during evaluation.

Pooling layers, including max pooling and average pooling, are conservatively assumed to have a Lipschitz constant of 1. Max pooling performs a maximum operation over the input, which does not amplify the norm, and average pooling is a linear operation with bounded norm.

Residual connections involve summing the output of a subnetwork with its input. The Lipschitz constant of a residual block is given by:
\[
L_{\text{residual}} \leq L_{\text{main}} + 1,
\]
where \( L_{\text{main}} \) is the Lipschitz constant of the main path (e.g., convolutional layers). The identity mapping has a Lipschitz constant of 1, and the addition operation preserves the norm.

The overall PUB for a ResNet model is computed by multiplying the Lipschitz constants of all layers, taking into account the components described above. This product provides a conservative upper bound for the Lipschitz constant of the network, which is crucial for robust certification and stability analysis.

\subsection{Explanation of PUB Overestimation with Linear Network Example}

The Product Upper Bound (PUB) overestimates the Lipschitz constant because it assumes perfect alignment of singular vectors across layers, which rarely happens in practice. For a linear network with 110 fully connected layers, where each layer has a spectral norm of 2, PUB computes the Lipschitz constant as $2^{110}$, an astronomically large number. However, the true Lipschitz constant is the spectral norm of the product of all weight matrices, which can be significantly smaller due to cancellations and the lack of perfect alignment.

This discrepancy becomes even more pronounced in deep nonlinear networks like ResNets. Nonlinearities such as ReLU further reduce the Lipschitz constant by "cutting" the norm of gradients or activations approximately in half, on average. Additionally, residual connections and batch normalization contribute to the reduction of the true Lipschitz constant. PUB’s exponential growth with depth makes it an overly conservative measure, especially for robustness certification.

We initialized a small linear network with depth 110, input dimension 100, and output dimension 10, using Kaiming Uniform Initialization~\citep{he2015delving}. In this setup, the true Lipschitz constant at initialization was found to be extremely low, effectively around numerical zero, while the PUB was significantly higher, approximately $2.6 \times 10^{5}$. This stark difference illustrates the conservative nature of PUB and its limitations in accurately reflecting the true Lipschitz behavior of deep networks.

\subsection{Remark on use of Brent's solver}
To compute the term $s_0$ in Equation~(\ref{eq:compute_s0}), we use Brent's solver.
Numerical solvers like Brent's method achieve precision comparable to CDF estimation (~$10^{-16}$ for double precision) and are well-behaved if they do converge, delivering machine-level accuracy. 
%We've tested relative errors across 
%and , confirming reliability. 
 If convergence fails we can revert to the default baseline radius, ensuring robust confidence intervals without type-I errors.

% \vfill

% \bibliography{ref}
% \bibliographystyle{plainnat}

\end{document}